\theoremstyle{plain}
\newtheorem{theo}{Theorem}
\newtheorem{lemm}{Lemma}
\newtheorem{coro}{Corollary}
\def\a{\bm{a}}
\def\c{\bm{c}}
\def\e{\bm{e}}
\def\f{\bm{f}}
\def\g{\bm{g}}
\def\p{\bm{p}}
\def\q{\bm{q}}
\def\r{\bm{r}}
\def\u{\bm{u}}
\def\v{\bm{v}}
\def\w{\bm{w}}
\def\x{\bm{x}}
\def\y{\bm{y}}
\def\z{\bm{z}}
\def\A{\bm{A}}
\def\B{\bm{B}}
\def\C{\bm{C}}
\def\D{\bm{D}}
\def\F{\bm{F}}
\def\G{\bm{G}}
\def\I{\bm{I}}
\def\L{\bm{L}}
\def\P{\bm{P}}
\def\Q{\bm{Q}}
\def\R{\bm{R}}
\def\U{\bm{U}}
\def\W{\bm{W}}
\def\Y{\bm{Y}}
\def\LC{\mathcal{L}}
\def\Sigmab{\bm{\Sigma}}
\def\trans{\top}
\def\mmin{\mathrm{min}}
\def\mmax{\mathrm{max}}
\def\Real{\mathbb{R}}
\def\SmMap{F_{\mathrm{SM}}}
\def\MsmMap{F_{\mathrm{MSM}}}
\def\NjwMap{F_{\mathrm{NJW}}}
\def\SmCenter{C_{\mathrm{SM}}}
\def\NjwCenter{C_{\mathrm{NJW}}}
\def\COST{\mathsf{COST}}
\def\OPT{\mathsf{OPT}}
\def\Prob{\mathsf{P}}
\newcommand{\by}[2]{\ensuremath{#1 \times #2}}
\title{Improved  Analysis of Spectral Algorithm for Clustering}
\author{Tomohiko~Mizutani
\thanks{Department of Mathematical and Systems Engineering,
Shizuoka University,
3-5-1 Johoku, Naka-ku, Hamamatsu City, 432-8561, Japan.
{\tt mizutani.t@shizuoka.ac.jp}}}
\date{\today}
\begin{document}

\maketitle

\begin{abstract}
Spectral algorithms are graph partitioning algorithms
that partition a node set of a graph into groups by using a spectral embedding map.
Clustering techniques based on the algorithms are referred to as spectral clustering 
and are widely used in data analysis.
To gain a better understanding of why spectral clustering is successful,
Peng et al.\ (2015) and Kolev and Mehlhorn (2016) studied the behavior of a certain type of spectral algorithm 
for a class of graphs, called well-clustered graphs.
Specifically, they put an assumption on graphs and showed the performance guarantee of the spectral algorithm under it.
The algorithm they studied used the spectral embedding map developed by Shi and Malik (2000).
In this paper, we improve on their results, 
giving a better performance guarantee under a weaker assumption.
We also evaluate the performance of the spectral algorithm with the spectral embedding map developed by Ng et al.\ (2001).

\bigskip \noindent
{\bfseries Keywords:} spectral algorithm, clustering, graph partitioning, performance guarantee
\end{abstract}

\section{Introduction}

Spectral algorithms are algorithms for partitioning a node set of a graph into groups.
They work well at clustering data and are often used for that purpose in data analysis.
The clustering techniques based on them are referred to as spectral clustering. 
Here, we will consider a graph representing relationships between items;
i.e., each node corresponds to an item and edges only connect pairs of similar nodes.
The clustering task is to partition the node set into groups such that 
nodes in the same group are well connected, while nodes in different groups are poorly connected.
Spectral algorithms tend to produce such node groups in practice.

Spectral algorithms work as follows.
The input is a graph and a positive integer $k$.
Two steps are performed on the input:
(1) a spectral embedding map $F$ is constructed that 
maps the nodes of a graph to points in a $k$-dimensional real space; 
then, a set $X$ of points $F(v)$ is formed for nodes $v$;
(2) a $k$-way partition of the node set is found 
by applying a classical clustering algorithm to $X$ (the output is the node partition).
The algorithms usually employ either of
two spectral embedding maps, i.e., the one of Shi and Malik \cite{Shi00} 
or the one of Ng et al.\ \cite{Ng02}, in the first step and $k$-means clustering in the second step.
Interested readers should see the tutorial \cite{Lux07} on spectral clustering.

Peng et al.\ studied  why spectral algorithms work well at clustering data
in the proceedings \cite{Pen15} of COLT 2015 (the full version of their paper is available at \cite{Pen17}).
The research revealed the performance of a certain type of spectral algorithm for a class of graphs, called well-clustered graphs.
Kolev and Mehlhorn improved on the results of Peng et al.,  as reported in the proceedings \cite{Kol16} of ESA 2016
(the full version is available at \cite{Kol18}).
Both studies dealt with a case where the algorithm uses the spectral embedding map of Shi and Malik.
Here, let $G$ denote a graph. They quantified the quality of a $k$-way node partition of $G$ 
by using the conductance of $G$, denoted by $\phi_k(G)$.
The use of conductance of a graph fits the aim of clustering task.
Let us say that a node partition is optimal if it achieves $\phi_k(G)$.
They measured the approximation accuracy of the output of the spectral algorithm
by how far it is from the optimal node partition.
They gave a bound on the approximation accuracy under the assumption
that there is a large gap between $\phi_k(G)$ and $\lambda_{k+1}$.
Here, $\lambda_{k+1}$ denotes the $(k+1)$th smallest eigenvalue of the normalized Laplacian of $G$.
As will see in Section \ref{Subsec: Well-clustered graphs}, 
if the assumption holds,  $G$ is called a well-clustered graph.

The contributions of this paper are twofold.
First, we  improve on the results of Peng et al.\ and Kolev and Mehlhorn (part (a) of Theorem \ref{Theo: Main}).
Our results reduce their bounds on the approximation accuracy by a factor of $k$ 
and simultaneously reduce the gap one needs to assume by a factor of $k^2$.
Second, we study a case where the spectral algorithm uses the spectral embedding map of Ng et al. (part (b) of Theorem \ref{Theo: Main}).
To the best of our knowledge, our analysis is the first one on this case.

The rest of this paper is organized as follows.
Section \ref{Sec: Preliminaries} explains the symbols and terminology used in the subsequent discussion.
Section \ref{Sec: Overview} overviews spectral algorithms and explains the concept of well-clustered graphs. 
Section \ref{Sec: Analysis of spectral algorithm} presents our results, which are summarized in Theorem \ref{Theo: Main}.
The proof of the theorem uses several theorems.
Sections \ref{Sec: Structure theorem}, \ref{Sec: Derivation of bounds} and \ref{Sec: Extended theorem} 
are devoted to establishing them.

\section{Preliminaries} \label{Sec: Preliminaries}
We denote by $G=(V,E)$ a graph with a node set $V$ and an edge set $E$.
Throughout this paper, we will deal with undirected and unweighted graphs with $n$ nodes denoted by $1, \ldots, n$.
If $k$ subsets $S_1, \ldots, S_k$ of $V$ satisfy $S_i \cap S_j = \emptyset$ for any $i \neq j$ and $\cup_{i=1}^{k} S_i = V$, 
we call  $\{S_1, \ldots, S_k\}$ a {\it $k$-way partition} of $G$ or simply a {\it partition} of $G$.
The degree of node $v$ is the number of edges at  node $v$. 
We use $d_v$ to denote the degree of $v$.
The degree matrix $\D$ of $G$ is an \by{n}{n} diagonal matrix whose $(v, v)$th entry is $d_v$.
The adjacency matrix $\W$ of $G$ is an \by{n}{n} symmetric matrix whose 
$(u,v)$th  entry  and $(v,u)$th entry  are $1$ 
if $\{u,v\} \in E$ and $0$ otherwise.
The {\it Laplacian matrix} $\L$ of $G$ is defined to be $\L = \D-\W$, 
and the {\it normalized Laplacian matrix} $\LC$ is defined to be $\LC = \D^{-1/2} \L \D^{-1/2}$ 
that is equivalent to $\I - \D^{-1/2} \L \D^{-1/2}$. 
We use $\I$ to denote the identity matrix.

Spectral algorithms are built on the eigenvalues and eigenvectors of the Laplacian or the normalized one of $G$.
In this paper, we work with those of the normalized Laplacian $\LC$.
It is known that all eigenvalues of $\LC$ are nonnegative; the smallest one is $0$, and the largest one is less than $2$.
The details can be found in  \cite{Chu97, Lux07}.
We use $\lambda_i$ to denote the $i$th smallest eigenvalue of $\LC$, and 
use $\f_i$ to denote the associated eigenvector.
As mentioned above, these $\lambda_1, \ldots, \lambda_n$ satisfy the relation $0 = \lambda_1 \le \cdots \le \lambda_n \le 2$.
Since $\LC$ is symmetric, we can choose $\f_1, \ldots, \f_n$ to be orthonormal bases in $\Real^n$.
Throughout this paper, we always choose such $\f_1, \ldots, \f_n$.

The symbol $\| \cdot \|_2$ for a vector or a matrix denotes the $2$-norm, 
and the symbol $\| \cdot \|_F$ denotes the Frobenius norm for a matrix.
Let $X$ be a finite set.
For the subsets $S$ and $T$, 
the symbol $S \triangle T$ denotes the symmetric difference of $S$ and $T$, 
i.e., $S \triangle T = (S \setminus T) \cup (T \setminus S)$.
If $k$ subsets $S_1, \ldots, S_k$ of $X$ satisfy $S_i \cap S_j = \emptyset$ for any $i \neq j$ and $\cup_{i=1}^{k} S_i = X$, 
we call $\{S_1, \ldots, S_k\}$ a {\it $k$-way partition} of $X$ or simply a {\it partition} of $X$.

\section{Overview of Spectral Algorithms and Well-Clustered Graphs} \label{Sec: Overview}

\subsection{Spectral Algorithm} \label{Subsec: Spectral algorithms}
Here, we  overview the spectral algorithm proposed by Shi and Malik \cite{Shi00} and Ng et al.\ \cite{Ng02}.
The input is a graph $G = (V,E)$ and a positive integer $k$, and the output is a $k$-way partition $\{A_1, \ldots, A_k\}$ of $G$.
The algorithm consists of two steps, as follows.

\begin{enumerate}[1.]
 \item Construct a map $F : V \rightarrow \Real^k$, called a {\it spectral embedding map},
       by using the eigenvectors of a normalized Laplacian of $G$.
       Apply $F$ to the nodes $1, \ldots, n$ of $G$ and form a set $X$ of  points $F(1), \ldots, F(n) \in \Real^k$.

 \item Find a $k$-way partition $\{X_1, \ldots, X_k\}$ of $X$ by solving the $k$-means clustering problem on $X$.
       Return $\{A_1, \ldots, A_k\}$ by letting $A_i = \{v : F(v) \in X_i \}$ for $i=1, \ldots, k$.
\end{enumerate}

The spectral embedding maps developed by Shi and Malik and Ng et al.\ are constructed as follows.
Let $\P = [\f_1, \ldots, \f_k]^\trans \in \Real^{k \times n}$ 
for the bottom $k$ eigenvectors $\f_1, \ldots, \f_k$ of the normalized Laplacian of $G$.
Let $\p_i$ be the $i$th column of $\P$.
The spectral embedding map of Shi and Malik, denoted by $\SmMap$, is given as 
\begin{equation*}
 \SmMap(v) = \frac{1}{\sqrt{d_v}} \p_v
\end{equation*}
where $d_v$ is the degree of node $v$.
That of Ng et al., denoted by $\NjwMap$, is given as
\begin{equation*}
 \NjwMap(v) = \frac{1}{\|\p_v\|_2 } \p_v.
\end{equation*}
We apply a spectral embedding map $F$ to the nodes $1, \ldots, n$ of $G$ 
and then form the set $X$ of points $F(1), \ldots, F(n) \in \Real^k$.
For any partition $\{X_1, \ldots, X_k\}$ of $X$ and any vectors $\z_1, \ldots, \z_k \in \Real^k$,
we define the clustering cost function by
\begin{align*}
 C(X_1, \ldots, X_k, \z_1, \ldots, \z_k) = \sum_{i=1}^{k} \sum_{\x \in X_i} \| \x - \z_i \|_2^2,
\end{align*}
and solve the $k$-means clustering problem $\Prob(X)$ as follows:
Given a set $X$ of points in $\Real^k$, find a $k$-way partition $\{X_1, \ldots, X_k\}$ of $X$
and vectors $\z_1, \ldots, \z_k \in \Real^k$ to minimize the clustering cost function $C$.
We call the smallest value of $C$ the {\it optimal clustering cost} of $\Prob(X)$ 
and use the symbol $\OPT$ to denote it. 
Solving $k$-means clustering problems is intractable; it was shown to be NP-hard in \cite{Alo09}.
Because of that, the spectral algorithm usually employs Lloyd's algorithm \cite{Llo82} for solving $\Prob(X)$.
In practice, Lloyd's algorithm works well on $k$-means clustering problems, 
although it does not necessarily provide the optimal clustering cost.

Next, we explain the spectral algorithm studied by Peng et al.\ \cite{Pen17} and Kolev and Mehlhorn \cite{Kol16}.
The algorithm expands each element $F(v)$ of $X$ by making $d_v$ copies of $F(v)$ and forms a set,
\begin{align*}
 Y = \{\underbrace{F(1), \ldots, F(1)}_{d_1}, \ldots, \underbrace{F(n), \ldots, F(n)}_{d_n}\}.
\end{align*}
In this paper, we say that $Y$ is the {\it expansion} of $X$.
The algorithm then solves $\Prob(Y)$ by using a $k$-means clustering algorithm with an approximation factor of $\alpha$.
Note that $\alpha$ is a real number greater than or equal to one.
Let $\{Y_1, \ldots, Y_k \}$ be the partition of $Y$ that is output by the $k$-means clustering algorithm.
From the partition $\{Y_1, \ldots, Y_k\}$, 
we can obtain a $k$-way partition $\{X_1, \ldots, X_k\}$ of $X$ by keeping one of $d_v$ copies of $F(v)$ while discarding the others.
In the analysis of Peng et al.\ \cite{Pen17} and Kolev and Mehlhorn \cite{Kol16},
it is necessary to assume  that 
the $k$-means clustering algorithm outputs a partition $\{Y_1, \ldots, Y_k\}$ of $Y$ satisfying the following condition.
\begin{description}
 \item[{\normalfont (A)}] For every $v \in V$, all $d_v$ copies of $F(v)$ are contained in one of $Y_1, \ldots, Y_k$.
\end{description}
Let a partition $\{Y_1, \ldots, Y_k\}$ of $Y$ satisfy assumption (A) and $\w_1, \ldots, \w_k$ be any vectors in $\Real^k$.
Using some partition $\{V_1, \ldots, V_k\}$ of a graph $G$, 
we can write the clustering cost $C(Y_1, \ldots, Y_k, \w_1, \ldots, \w_k)$ as 
\begin{align} \label{Exp: expression of clustering cost} 
C(Y_1, \ldots, Y_k, \w_1, \ldots, \w_k)  
 = \sum_{i=1}^{k} \sum_{v \in V_i} d_v \|F(v) - \w_i \|_2^2.
\end{align}

\subsection{Well-Clustered Graphs} \label{Subsec: Well-clustered graphs}

Here, we explain the concept of well-clustered graphs, as introduced by  Peng et al.\  \cite{Pen17}.
Let $G = (V, E)$ be a graph.
For a subset $S$ of $V$, 
the symbol $E(S, V \setminus S)$ denotes the set of all edges crossing $S$ and its complement $V \setminus S$, i.e.,
$E(S, V \setminus S) = \{\{u,v\} \in E : u \in S, \ v \in  V \setminus S\}$. 
The symbol $\mu(S)$ denotes the {\it volume} of $S$ that is given by the sum of degrees of all nodes in $S$, i.e.,
 $\mu(S) = \sum_{v \in S} d_v$. 
 The {\it conductance of a node subset $S$}, denoted by $\phi(S)$, is defined as 
 \begin{align*}
  \phi(S) = \frac{|E(S, V \setminus S)|}{\mu(S)}.
 \end{align*}
 The {\it $k$-way conductance of the graph $G$}, denoted by $\phi_k(G)$, is defined to be
 \begin{align*}
  \phi_k(G)= \min_{ \{S_1, \ldots, S_k\} } \max \{\phi(S_1), \ldots, \phi(S_k) \}.
\end{align*}
 Here, the minimum is taken over all candidates of $k$-way partitions of $G$.
 We say that a partition $\{S_1, \ldots, S_k\}$ of $G$ is  {\it $\phi_k(G)$-optimal}
 if it satisfies $\phi_k(G) = \max \{\phi(S_1), \ldots, \phi(S_k) \}$.
 Let $\Upsilon$ denote the ratio of $\lambda_{k+1}$ divided by $\phi_k(G)$, 
 \begin{align*}
  \Upsilon = \frac{\lambda_{k+1}}{\phi_k(G)}
 \end{align*}
 where $\lambda_{k+1}$ is the $(k+1)$th smallest eigenvalue of the normalized Laplacian of $G$.

 Let us put an assumption on $G$ that $\Upsilon$ is large.
 On the basis of the results of previous studies, we shall look at graphs satisfying this assumption.
 Lee et al.\ \cite{Lee12} developed a higher-order Cheeger inequality.
 It implies that,  if $\Upsilon$ is large, so is  $\lambda_{k+1} / \lambda_k$.
 Gharan and Trevisan \cite{Gha14} examined  graphs satisfying the assumption 
 that $\lambda_{k+1} / \lambda_k$ is large.
 For a subset $S$ of the node set $V$,  they introduced the inside/outside conductance of $S$.
 The outside conductance of $S$ is defined to be $\phi(S)$, 
 while the inside conductance of $S$ is defined to be $\phi_2(G[S])$, 
 that  is the two-way conductance of a subgraph $G[S]$ induced by $S$.
 They showed in Corollary 1.1 of \cite{Gha14} that,  if $\lambda_{k+1} / \lambda_k$ is large,
 there is a partition $\{S_1, \ldots, S_k\}$ of $G$ such that
 the inside conductance of each $S_1, \ldots, S_k$ is high and 
 the outside conductance of each $S_1, \ldots, S_k$ is low;
 that is to say, the nodes in $S_i$ are well connected to each other, 
 while those in $S_i$ are poorly connected to the nodes in $S_j$ with $i \neq j$.
 Accordingly, graphs are called {\it well-clustered} if they satisfy the assumption that $\Upsilon$ is large.
 The assumption is called the {\it gap assumption}.

 In connection with $\phi_k(G)$, 
 Kolev and Mehlhorn \cite{Kol16} introduced the {\it minimal average conductance}, denoted by $\bar{\phi}_k(G)$.
 Let $U$ be a set containing every $k$-way partition that is $\phi_k(G)$-optimal.  
 That is defined as
 \begin{align*}
  \bar{\phi}_k(G) = \min_{ \{S_1, \ldots, S_k\} \in U} \frac{1}{k} \left( \phi(S_1) + \cdots + \phi(S_k) \right).
 \end{align*}
 We say that a  partition $\{S_1, \ldots, S_k\}$ of $G$ is {\it $\bar{\phi}_k(G)$-optimal}
 if it satisfies $\bar{\phi}_k(G) = \frac{1}{k} (\phi(S_1) +  \cdots \phi(S_k))$.
 Let $\Psi$ denote the ratio of $\lambda_{k+1}$ divided by $\bar{\phi}_k(G)$, 
 \begin{align*}
  \Psi = \frac{\lambda_{k+1}}{\bar{\phi}_k(G)}.
 \end{align*}
 By the definition of $\bar{\phi}_k(G)$, we have the relation $\bar{\phi}_k(G) \le \phi_k(G)$.
 This implies the relation $\Psi \ge \Upsilon$.

\section{Analysis of Spectral Algorithm} \label{Sec: Analysis of spectral algorithm}

We state our results below.

\begin{theo}\label{Theo: Main}
 We are given a graph $G = (V, E)$ and a positive integer $k$.
 Let a partition $\{S_1, \ldots, S_k\}$ of $G$ be  $\bar{\phi}_k(G)$-optimal.
 Assume that a $k$-means clustering algorithm has an approximation ratio of $\alpha$
 and satisfies assumption (A).

 \begin{enumerate}[{\normalfont (a)}]
  \item  (Case of $F = \SmMap$) \
	 Let $G$ satisfy $\Psi \ge 300 k \alpha $.
	 Let $\{A_1, \ldots, A_k\}$ be a  partition of $G$ returned by the $k$-means clustering algorithm
	 for problem $\Prob(Y)$, where $Y$ is the expansion of $X = \{\SmMap(v) : v \in V\}$.
	 After a suitable renumbering of $A_1, \ldots, A_k$, the following holds for $i=1, \ldots, k$.
	 \begin{equation*}
	  \mu(A_i \triangle S_i) \le \frac{200 k \alpha }{\Psi} \mu(S_i)
	   \quad \mbox{and} \quad
           \phi(A_i) \le  \biggl( 1+ \frac{300 k \alpha }{\Psi} \biggr) \phi(S_i) + \frac{300 k \alpha }{\Psi}.
	 \end{equation*}

   \item  (Case of $F = \NjwMap$) \
	  Define $\beta = \max \{ \mu(S_1), \ldots, \mu(S_k) \} / \min \{ \mu(S_1), \ldots, \mu(S_k) \}$.
	  Let $G$ satisfy $\Psi \ge 600 k \alpha  \beta$.
	  Let $\{A_1, \ldots, A_k\}$ be a partition of $G$ returned by the $k$-means clustering algorithm
	  for problem $\Prob(Y)$, where $Y$ is the expansion of $X = \{\NjwMap(v) : v \in V\}$.
	  After a suitable renumbering of $A_1, \ldots, A_k$,
	  the following holds for $i=1, \ldots, k$.
	 \begin{equation*}
	  \mu(A_i \triangle S_i) \le \frac{300 k \alpha  \beta}{\Psi} \mu(S_i)
	   \quad \mbox{and} \quad
           \phi(A_i) \le  \biggl( 1+ \frac{600 k \alpha  \beta}{\Psi} \biggr) \phi(S_i) + \frac{600 k \alpha  \beta}{\Psi}.
	 \end{equation*}	
 \end{enumerate}
\end{theo}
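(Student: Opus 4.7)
The plan is to adapt the template of Peng et al.\ and Kolev--Mehlhorn, but to refine the arithmetic at each step so as to reduce both the gap requirement and the bound by the stated factors, and then to handle the $\NjwMap$ case by controlling the extra normalization. I would proceed in four steps, with a single structure theorem doing the heavy lifting.

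\emph{Step 1: A structure theorem.} For each partition $\{S_1, \ldots, S_k\}$ that is $\bar{\phi}_k(G)$-optimal, and for each embedding map $F \in \{\SmMap, \NjwMap\}$, I would exhibit centers $\c_1^F, \ldots, \c_k^F \in \Real^k$ such that
\begin{equation*}
\sum_{i=1}^{k} \sum_{v \in S_i} d_v \, \| F(v) - \c_i^F \|_2^2 \;\le\; \frac{c_F \cdot k \cdot \bar\phi_k(G)}{\lambda_{k+1}},
\end{equation*}
with $c_F$ an absolute constant (and an additional $\beta$-factor baked into $c_{\NjwMap}$). The argument uses the fact that the normalized indicator vectors $\g_i = \D^{1/2}\one_{S_i}/\sqrt{\mu(S_i)}$ form an orthonormal system and satisfy $\g_i^\top \LC \g_i = \phi(S_i)$, so that $\tfrac{1}{k}\sum_i \g_i^\top \LC \g_i = \bar\phi_k(G)$; applying Courant--Fischer to the span of $\g_1, \ldots, \g_k$ gives that this span is $O(k\bar\phi_k(G)/\lambda_{k+1})$-close, in Frobenius norm, to the span of $\f_1, \ldots, \f_k$. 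Take $\c_i^{\SmMap} = \P\one_{S_i}/\mu(S_i)$ for the Shi--Malik case. The key novelty relative to previous work is to keep the bound in the per-cluster form above (with the sum running over $S_i$ weighted by $\mu(S_i)$), rather than in a crude aggregated form, which is where the saving of a factor $k$ comes from in the final result.

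\emph{Step 2: From structure theorem to clustering cost.} Invoking expression \eqref{Exp: expression of clustering cost}, the partition $\{S_1, \ldots, S_k\}$ together with the centers $\c_1^F, \ldots, \c_k^F$ realizes a feasible assignment for problem $\Prob(Y)$ whose cost is bounded by the right-hand side of the structure theorem. Hence $\OPT$ for $\Prob(Y)$ is bounded likewise, and the $\alpha$-approximate cost of $\{Y_1, \ldots, Y_k\}$ satisfies $\COST \le \alpha \cdot c_F \cdot k \bar\phi_k(G)/\lambda_{k+1} = c_F k \alpha / \Psi \cdot \bar\phi_k(G)$.

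\emph{Step 3: Misclassification via an averaging argument.} After a suitable renumbering (obtained by a bijection that maximizes the volume overlap between $\{A_i\}$ and $\{S_i\}$), I would argue that nodes in $A_i \triangle S_i$ contribute to $\COST$ an amount at least proportional to $\|\c_i^F - \c_j^F\|_2^2 \cdot d_v$ for some $j \ne i$, since such a node has $F(v)$ close to $\c_i^F$ but is assigned to a cluster whose center is close to $\c_j^F$. A lower bound $\|\c_i^F - \c_j^F\|_2^2 \gtrsim 1/\mu(S_i)$ (or $1$, for $\NjwMap$, up to $\beta$) is read off Step 1. Combining the lower bound on $\COST$ from misclassification with the upper bound from Step 2 and rearranging yields $\mu(A_i \triangle S_i) \le (c_F' k\alpha/\Psi)\mu(S_i)$, provided $\Psi$ exceeds the stated threshold (which is precisely what is needed to make the renumbering well-defined and the implicit constants clean).

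\emph{Step 4: From misclassification to conductance.} Every cut edge of $A_i$ either was already a cut edge of $S_i$, or has an endpoint in $A_i \triangle S_i$; therefore $|E(A_i, V\setminus A_i)| \le |E(S_i, V\setminus S_i)| + \mu(A_i \triangle S_i)$. Dividing by $\mu(A_i) \ge \mu(S_i) - \mu(A_i \triangle S_i) \ge (1 - c_F' k\alpha/\Psi)\mu(S_i)$ (positive once $\Psi$ is large enough) and substituting the bound from Step 3 produces the claimed inequality on $\phi(A_i)$.

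The main obstacle will be Step 1 for $\NjwMap$. For $\SmMap$ the centers fall out of linear algebra because $\SmMap(v) = \p_v/\sqrt{d_v}$ is a linear reweighting aligned with $\g_i$. For $\NjwMap$ the nonlinear normalization $\p_v / \|\p_v\|_2$ forces one to show that $\|\p_v\|_2$ is well-concentrated across $v \in S_i$ around a value of order $1/\sqrt{\mu(S_i)}$; this concentration is itself a consequence of the Shi--Malik-style structure theorem, giving the argument a bootstrapping flavor. Cluster-to-cluster variation in the typical value of $\|\p_v\|_2$ across different $S_i$ is unavoidable and enters the bound as the factor $\beta = \max_i \mu(S_i)/\min_i \mu(S_i)$ stated in part~(b).
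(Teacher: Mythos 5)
Your proposal follows the same overall template as the paper — a structure theorem giving an upper bound on the spectral-embedding cost, an $\alpha$-approximation argument for the upper bound on $\COST$, a misclassification argument for the lower bound, and the edge-cut inequality $|E(A_i,V\setminus A_i)|\le|E(S_i,V\setminus S_i)|+\mu(A_i\triangle S_i)$ for the conductance claim. Steps 2 and 4 are exactly what the paper does, and Step 1 is close in spirit, though the paper derives the centers from a near-orthogonal change of basis $\U$ (so $\c_i^{\SmMap}=\u_i/\sqrt{\mu(S_i)}$) rather than from the empirical centroids $\P\one_{S_i}/\mu(S_i)$, and the real source of the factor-$k$ saving is that the paper bounds the aggregate Frobenius error $\|\F\U-\bar\G\|_F$ directly (Theorem~\ref{Theo: Structure theorem}) instead of bounding each column and summing, which is the opposite of the ``per-cluster rather than aggregated'' heuristic you describe.

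The genuine gap is in Step~3. You assert that a misassigned node $v\in A_i\triangle S_i$ contributes $\gtrsim \|\c_i^F-\c_j^F\|_2^2\,d_v$ to $\COST$ ``since such a node has $F(v)$ close to $\c_i^F$ but is assigned to a cluster whose center is close to $\c_j^F$.'' That last clause is exactly what has to be proved: the $k$-means centers $\z_1,\ldots,\z_k$ returned by the algorithm are not a priori close to any of the $\c_m^F$, and a single misassigned node could in fact be assigned to a cluster whose center $\z_m$ sits near $\c_j^F$, in which case it pays almost nothing. The paper resolves this with the volume overlap lemma (Lemma~\ref{Lemm: volume overlap}, three cases depending on whether the map $\sigma(i)=\arg\max_j \mu(V_i\cap S_j)/\mu(S_j)$ is a permutation) followed by Lemma~\ref{Lemm: Bound on the cost of g}, whose proof introduces the map $\gamma$ to compare, for each $i$, whether $\z_i$ is nearer $\c_{\sigma(i)}$ or $\c_{\sigma(\ell)}$, and then charges a large volume on the far side. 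Without this case analysis the averaging argument does not close, so Step~3 as written does not establish the lower bound on $\COST$. A secondary concern: for $\NjwMap$ you propose a concentration/bootstrapping argument on $\|\p_v\|_2$, which is heavier than needed; the paper instead uses the elementary inequality $\|\a/\|\a\|_2-\u\|_2\le 2\|\a-\u\|_2$ (Lemma~\ref{Lemm: Bound on the distance}) applied to the rescaled map $\MsmMap(v)=\sqrt{\mu(S_i)}\,\SmMap(v)$, which directly reduces the $\NjwMap$ cost to the $\SmMap$ cost at the price of the factor $\mu_{\mmax}$ and hence $\beta$.
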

The proof is given in the last part of this section.
Peng et al.\  \cite{Pen17} and Kolev and Mehlhorn \cite{Kol16}
gave a performance guarantee of the spectral algorithm in a case of $F = \SmMap$.
Table \ref{Tab: Results of performance analysis} compares
our result (part (a) of Theorem~\ref{Theo: Main}) with theirs.
We can see from the table that the result of Kolev and Mehlhorn is an improvement over that of Peng et al.;
i.e., the gap assumption is weakened due to $\Psi \ge \Upsilon$ 
and the bounds on the approximation accuracy are reduced by a factor of $k$.
Moreover, our result is an improvement over that of Kolev and Mehlhorn;
i.e., the gap is reduced by a factor of $k^2$ and
the bounds on the approximation accuracy are reduced by a factor of $k$.
To the best of our knowledge, no one has studied 
the performance of the spectral algorithm  under the gap assumption
for the case of $F = \NjwMap$.
Our result (part (b) of Theorem \ref{Theo: Main}) is the first.

\begin{table}[h]
 \centering
 \caption{Comparison of our result with the previous results in the case of $F = \SmMap$.
 The result of Peng et al.\ is shown in Theorem 1.2 of \cite{Pen17}, 
 while that of Kolev and Mehlhorn is in part (a) of Theorem 1.2 of \cite{Kol16};
 ours is in part (a) of Theorem \ref{Theo: Main}.}
 \label{Tab: Results of performance analysis}
 \begin{tabular}{lcrcl}
  \toprule
             & Gap Assumption           &  \multicolumn{3}{c}{Approximation Accuracy} \\
  \midrule
  Peng et al. 
  & $\Upsilon = \Omega(k^3)$ &  $\displaystyle \mu(A_i \triangle S_i$) & $=$ & $\displaystyle O\biggl( \frac{k^3 \alpha}{\Upsilon} \biggr) \cdot \mu(S_i)$ \\
  &                          &  $\displaystyle \phi(A_i)$ & $=$ & $\displaystyle 1.1 \phi(S_i) + O\biggl( \frac{k^3 \alpha}{\Upsilon} \biggr)$ \\
  \midrule

  Kolev and Mehlhorn 
  & $\Psi = \Omega(k^3)$ &  $\displaystyle \mu(A_i \triangle S_i)$ & $=$ & $\displaystyle O\biggl( \frac{k^2 \alpha}{\Psi} \biggr) \cdot \mu(S_i)$ \\
  &                      &  $\displaystyle \phi(A_i)$ & $=$ & $\displaystyle 1.1 \phi(S_i) + O\biggl( \frac{k^2 \alpha}{\Psi} \biggr)$ \\

    \midrule
  Our result
  & $\Psi = \Omega(k )$  &  $\displaystyle \mu(A_i \triangle S_i)$ & $=$ & $\displaystyle O\biggl( \frac{k \alpha }{\Psi} \biggr) \cdot \mu(S_i)$ \\
  &                      &  $\displaystyle \phi(A_i)$ & $=$ & $\displaystyle 1.1 \phi(S_i) + O\biggl( \frac{k \alpha }{\Psi} \biggr)$ \\

  \bottomrule
 \end{tabular}
 \end{table}

Our proof of Theorem \ref{Theo: Main} follows the strategy developed by Peng et al.\ \cite{Pen17}.
We explain this strategy in detail in Section \ref{Subsec: Proof strategy}.
For a graph $G = (V,E)$ and a spectral embedding map $F : V \rightarrow \Real^k$,
let $Y$ be the expansion of the set $X = \{ F(v) : v \in V\}$.
We employ a $k$-means clustering algorithm such that it has an approximation ratio of $\alpha$ and 
satisfies assumption (A).
Let the partition $\{Y_1, \ldots, Y_k\}$ of $Y$ and vectors $\z_1, \ldots, \z_k \in \Real^k$
be the output of the $k$-means clustering algorithm applied to problem $\Prob(Y)$.
The symbol $\COST$ denotes the value of the clustering cost function $C$ at the output, i.e.,
\begin{align*}
 \COST = C(Y_1, \ldots, Y_k, \z_1, \ldots, \z_k).
\end{align*}
Theorems \ref{Theo: Upper bound} and \ref{Theo: Lower bound}
give the lower and upper bounds on $\COST$ for when
the spectral embedding map $F$ is chosen as $F = \SmMap$ or $F = \NjwMap$.
The symbols $\mu_{\mmax}$ and $\mu_{\mmin}$ in the theorems are defined as follows.
Let $\{S_1, \ldots, S_k\}$ be a $\bar{\phi}_k(G)$-optimal partition of $G$.
We write $\mu_{\mmax} = \max \{\mu(S_1), \ldots, \mu(S_k) \}$ and $\mu_{\mmin} = \min \{\mu(S_1), \ldots, \mu(S_k) \}$.

\begin{theo} \label{Theo: Upper bound}
 Let a graph $G$ satisfy $\Psi \ge k$.
 Then, we can bound $\COST$ from above as follows.
 \begin{enumerate}[{\normalfont (a)}]
  \item $\displaystyle  \COST \le \frac{4 k \alpha }{\Psi}$ in a case of $F = \SmMap$.
  \item $\displaystyle  \COST \le \frac{16 k \mu_{\mmax} \alpha}{\Psi}$ in a case of $F = \NjwMap$.
 \end{enumerate}
\end{theo}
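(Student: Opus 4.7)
Since the $k$-means clustering algorithm has approximation ratio $\alpha$, we have $\COST \le \alpha \cdot \OPT$, so it is enough to exhibit a feasible solution to $\Prob(Y)$ satisfying assumption (A) whose clustering cost is at most $4k/\Psi$ in case (a) and at most $16k\mu_{\mmax}/\Psi$ in case (b). By \eqref{Exp: expression of clustering cost}, any partition $\{V_1,\dots,V_k\}$ of $G$ together with centers $\w_1,\dots,\w_k\in\Real^k$ induces such a feasible solution with cost $\sum_{i=1}^k \sum_{v\in V_i} d_v \|F(v)-\w_i\|_2^2$. I would take $V_i=S_i$ (the $\bar{\phi}_k(G)$-optimal partition) throughout, and choose $\w_i$ to approximate the common ``ideal'' value that $F(v)$ would take inside $S_i$ if the bottom $k$ eigenvectors were exactly spanned by the cluster indicators.

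\textbf{Case (a), $F=\SmMap$.} Introduce the normalized degree-weighted indicator vectors $\bar{\g}_i=\D^{1/2}\chi_{S_i}/\sqrt{\mu(S_i)}$, which form an orthonormal system. The structure theorem established in Section~\ref{Sec: Structure theorem} — the main technical input — should show that the row-space of $\P=[\f_1,\dots,\f_k]^{\trans}$ is Frobenius-close to that of $[\bar{\g}_1,\dots,\bar{\g}_k]^{\trans}$: concretely, there will exist unit vectors $\r_1,\dots,\r_k\in\Real^k$ such that, letting $\Q$ be the $k\times n$ matrix whose $v$-th column is $\q_v=\sqrt{d_v/\mu(S_{i(v)})}\,\r_{i(v)}$ (with $i(v)$ the index satisfying $v\in S_{i(v)}$), one has $\|\P-\Q\|_F^2=O\bigl(\sum_i\phi(S_i)/\lambda_{k+1}\bigr)$. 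Choosing $\w_i=\r_i/\sqrt{\mu(S_i)}$ gives
\begin{equation*}
 \sum_{i=1}^k \sum_{v\in S_i} d_v\Bigl\|\tfrac{\p_v}{\sqrt{d_v}}-\tfrac{\r_i}{\sqrt{\mu(S_i)}}\Bigr\|_2^2
 =\sum_v \|\p_v-\q_v\|_2^2=\|\P-\Q\|_F^2.
\end{equation*}
Since $\{S_i\}$ is $\bar{\phi}_k(G)$-optimal, $\sum_i\phi(S_i)=k\bar{\phi}_k(G)$, so the right-hand side is $O(k/\Psi)$; tightening the absolute constant yields the factor $4$ in (a). The hypothesis $\Psi\ge k$ is used only to guarantee that this bound is nontrivial ($\le 4$) so that the structure theorem can be applied in the form derived in Section~\ref{Sec: Structure theorem}.

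\textbf{Case (b), $F=\NjwMap$.} Using the same partition but taking $\w_i=\r_i$, the feasible cost becomes $\sum_v d_v\bigl\|\p_v/\|\p_v\|_2-\r_i\bigr\|_2^2$. The nonlinear normalization $\p\mapsto\p/\|\p\|_2$ is the source of the extra $\mu_{\mmax}$ factor. A direct triangle-inequality argument yields the pointwise bound $\bigl\|\p_v/\|\p_v\|_2-\r_i\bigr\|_2^2\le (\text{const})\cdot\|\p_v-\q_v\|_2^2/\|\p_v\|_2^2$ whenever $\|\p_v\|_2$ is close to its ideal value $\sqrt{d_v/\mu(S_i)}$; on these ``good'' vertices, the weight $d_v/\|\p_v\|_2^2$ is $O(\mu(S_i))\le O(\mu_{\mmax})$, so summing gives $O(\mu_{\mmax})\cdot\|\P-\Q\|_F^2=O(k\mu_{\mmax}/\Psi)$. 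The ``bad'' vertices (where $\|\p_v\|_2$ deviates substantially from its ideal value) are handled by observing that their total degree-weight is itself controlled by $O(\mu_{\mmax}\|\P-\Q\|_F^2)$, hence by $O(k\mu_{\mmax}/\Psi)$, and each such vertex contributes at most $4d_v$ (since both $\NjwMap(v)$ and $\r_i$ are unit). Combining the two contributions and adjusting constants gives the stated bound $16k\mu_{\mmax}\alpha/\Psi$.

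\textbf{Main obstacle.} The real work is the structure theorem of Section~\ref{Sec: Structure theorem} — proving that the bottom-$k$ eigenspace of $\LC$ is Frobenius-close to the span of the normalized cluster indicators with the sharp error $O(\sum_i\phi(S_i)/\lambda_{k+1})$; this is what shaves the factor of $k$ off the earlier analyses and drives the whole theorem. Given that input, case (a) is essentially direct algebra under the right choice of centers, while case (b) needs the additional nonlinear-normalization argument sketched above, and the split of the vertex set into ``good'' and ``bad'' parts according to whether $\|\p_v\|_2$ is close to its ideal value is the technically subtle point that forces the appearance of $\mu_{\mmax}$.
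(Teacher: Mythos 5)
Your case (a) is essentially the paper's argument: reduce to $\COST \le \alpha\cdot D(S_1,\dots,S_k,\w_1,\dots,\w_k)$ by the approximation ratio, pick $\w_i$ to be the rescaled column $\u_i/\sqrt{\mu(S_i)}$ of the orthogonal matrix $\U$ supplied by the structure theorem (Theorem~\ref{Theo: Structure theorem}), observe that the resulting cost is exactly $\|\P-\U\Q\|_F^2$, and invoke the bound $\|\P-\U\Q\|_F^2\le 4k/\Psi$ from (\ref{Exp: Bound on P - UQ}), which uses $\Psi\ge k$ to collapse $k/\Psi+\sqrt{k/\Psi}$ to $2\sqrt{k/\Psi}$. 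This matches Lemma~\ref{Lemm: Well-clustered}(a).

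Case (b) is where you and the paper diverge, and the difference is worth noting. You handle the normalization $\p\mapsto\p/\|\p\|_2$ by splitting vertices into ``good'' (with $\|\p_v\|_2$ near the ideal value) and ``bad'', bounding the good part by a conditional Lipschitz estimate with $\|\p_v\|_2^2$ in the denominator, and bounding the bad part by showing its total degree is itself $O(\mu_{\mmax}\|\P-\U\Q\|_F^2)$ and each bad vertex contributes $O(d_v)$. That chain is sound and gives $O(k\mu_{\mmax}/\Psi)$, but the split is unnecessary and costs you a constant: following your two bounds through gives something on the order of $128k\mu_{\mmax}/\Psi$, not the stated $16k\mu_{\mmax}/\Psi$. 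The paper avoids the split entirely via Lemma~\ref{Lemm: Bound on the distance}, the unconditional inequality $\|\a/\|\a\|_2-\u\|_2\le 2\|\a-\u\|_2$ for unit $\u$, proved from $\|\a/\|\a\|_2-\a\|_2^2=\|\a\|_2^2-2\|\a\|_2+1\le\|\a\|_2^2-2\a^\trans\u+1=\|\a-\u\|_2^2$; because this holds for every $\a$, there are no bad vertices to treat separately. Applying it with $\a=\MsmMap(v)=\sqrt{\mu(S_i)}\,\SmMap(v)$ and $\u=\u_i$ yields $d_v\|\NjwMap(v)-\u_i\|_2^2\le 4\mu(S_i)\|\p_v-\q_v\|_2^2\le 4\mu_{\mmax}\|\p_v-\q_v\|_2^2$, and summing gives exactly $16k\mu_{\mmax}/\Psi$. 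The moral: replacing the denominator $\|\p_v\|_2^2$ by the ideal value $d_v/\mu(S_i)$ (equivalently, rescaling so the target has unit norm before normalizing) makes the pointwise bound uniform, eliminates the case analysis, and delivers the sharp constant.
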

The proof is given in Section \ref{Sec: Derivation of bounds}.
For Theorem \ref{Theo: Lower bound} below, we still let $\{S_1, \ldots, S_k\}$ be  $\bar{\phi}_k(G)$-optimal 
and form $\{A_1, \ldots, A_k\}$ where $A_i = \{v : F(v) \in Y_i \}$
for a partition $\{Y_1, \ldots, Y_k\}$ of $Y$ output by the $k$-means clustering algorithm.
If, for any choice of $S_1, \ldots, S_k$, there is some $A_\ell$ having a large difference from it,
then $\COST$ becomes large.
The formal statement is as follows.
\begin{theo}\label{Theo: Lower bound}
 Let a graph $G$ satisfy $\Psi \ge k$.
 Let $\{A_1, \ldots, A_k\}$ and $\{S_1, \ldots, S_k\}$ be defined as above.
 We assume that, for every permutation $\pi : \{1, \ldots, k\} \rightarrow \{1, \ldots, k\}$,
 there is an index $\ell \in \{1, \ldots, k\}$  such that
 \begin{equation*}
  \mu(A_\ell \triangle S_{\pi(\ell)}) \ge 2 \epsilon  \cdot \mu(S_{\pi(\ell)}).
 \end{equation*}
 holds for a real number $\epsilon$ satisfying $ 0 \le \epsilon \le 1/2$.
 Then,  we can bound $\COST$ from below as follows.
 \begin{enumerate}[{\normalfont (a)}]
  \item $\displaystyle \COST \ge \frac{\epsilon}{8} - \frac{4 k }{\Psi}$
	in a case of $F = \SmMap$.
  \item $\displaystyle \COST \ge \frac{\epsilon \mu_{\mmin}}{4} - \frac{16 k  \mu_{\mmax}}{\Psi}$
	in a case of $F = \NjwMap$.
 \end{enumerate}
\end{theo}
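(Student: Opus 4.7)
The plan is to invoke a ``structure theorem'', to be established in Section \ref{Sec: Structure theorem}, that provides for each cluster $S_i$ of the $\bar{\phi}_k(G)$-optimal partition an ideal center $\c^{(i)} \in \Real^k$ with two properties: a concentration bound controlling $\sum_{v \in S_i} d_v \|F(v) - \c^{(i)}\|_2^2$ by the same order as the upper bounds in Theorem \ref{Theo: Upper bound}, and a separation bound of the form $\|\c^{(i)} - \c^{(j)}\|_2^2 \ge c_a / \max\{\mu(S_i), \mu(S_j)\}$ in case (a), respectively $\|\c^{(i)} - \c^{(j)}\|_2^2 \ge c_b$ in case (b), for absolute constants $c_a, c_b > 0$. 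These ideal centers serve as a ``ground truth'' against which the output centers $\z_1, \ldots, \z_k$ produced by the $k$-means algorithm are compared.

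With these in hand, I would first define a candidate map $\pi(\ell) = \arg\min_j \|\z_\ell - \c^{(j)}\|_2$. If $\pi$ fails to be a bijection, some ideal center $\c^{(j_0)}$ is far from every output center, in which case the nodes of $S_{j_0}$ --- which by concentration lie close to $\c^{(j_0)}$ --- are all charged against distant output centers, so concentration plus separation already push $\COST$ above the claimed bound. Otherwise $\pi$ is a permutation, and the theorem's hypothesis supplies an index $\ell^*$ with misaligned volume $\mu(A_{\ell^*} \triangle S_{\pi(\ell^*)}) \ge 2 \epsilon \mu(S_{\pi(\ell^*)})$.

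For each misaligned $v$ with true cluster $S_i$ and assigned output cluster $A_{\ell(v)}$, $i \ne \pi(\ell(v))$, the triangle inequality
\begin{equation*}
 \|F(v) - \z_{\ell(v)}\|_2 \ge \|\c^{(i)} - \c^{(\pi(\ell(v)))}\|_2 - \|F(v) - \c^{(i)}\|_2 - \|\z_{\ell(v)} - \c^{(\pi(\ell(v)))}\|_2
\end{equation*}
reduces the lower bound to a separation minus two error terms. A Markov-type argument applied to $\sum_v d_v \|F(v) - \c^{(i)}\|_2^2$ (controlled by concentration) lets me discard the sub-population for which the second term exceeds a quarter of the separation; the third term is controlled by the defining property of $\pi$ together with a symmetric Markov step. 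For the surviving majority, $\|F(v) - \z_{\ell(v)}\|_2$ is at least half of the separation, so squaring, weighting by $d_v$, and summing over the surviving misaligned volume yields the main terms $\epsilon/8$ in case (a) and $\epsilon \mu_{\mmin}/4$ in case (b) --- the factor $\mu_{\mmin}$ in case (b) arising because the separation is an absolute constant while the surviving volume is at least $\epsilon \mu_{\mmin}$.

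The final step is bookkeeping: I would absorb the volume discarded by the Markov filter into the subtractions $-4k/\Psi$ and $-16 k \mu_{\mmax}/\Psi$, which are precisely the upper bounds of Theorem \ref{Theo: Upper bound}. The main obstacle I expect is calibrating the separation constants $c_a, c_b$ and the Markov threshold so that the final leading constants come out exactly to $1/8$ and $1/4$; this is where the concrete spectral geometry of the Shi--Malik and Ng et al.\ embeddings enters the argument. A secondary technical hurdle is case (b): the NJW normalization $F(v) = \p_v / \|\p_v\|_2$ depends on the length $\|\p_v\|_2$, so concentration must survive renormalization, which ultimately reduces to controlling $\|\p_v\|_2$ against a cluster-dependent target scale for most $v \in S_i$ and is the source of the factor $\beta$ in Theorem \ref{Theo: Main}(b).
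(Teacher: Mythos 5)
Your overall plan---concentration plus separation at the ideal centers, combined with the misalignment hypothesis, feeds a lower bound on the cost---is the same philosophy as the paper, and your bookkeeping for the target constants in (a) and (b) matches Lemma \ref{Lemm: Well-clustered} and Lemma \ref{Lemm: Bound on the cost of g}. However, the combinatorial core of your argument is different from the paper's, and it has a genuine gap. The paper defines the matching $\sigma$ purely from volume overlaps, $\sigma(i) = \arg\max_j \mu(V_i \cap S_j)/\mu(S_j)$, and then never needs to say anything about how close the output centers $\z_\ell$ are to the ideal centers. The crucial device is inequality (\ref{Exp: Inequality about p, q and r}): if $\|\p - \r\|_2 \ge \|\q - \r\|_2$ then $\|\p - \r\|_2 \ge \frac{1}{2}\|\p - \q\|_2$. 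Applied with $\p, \q$ two ideal centers and $\r = \w_i$ an arbitrary output center, this says that \emph{whichever} of the two ideal centers is farther from $\w_i$ is at least half their mutual separation away, with no assumption whatsoever on where $\w_i$ sits. Together with the three-case decomposition of Lemma \ref{Lemm: volume overlap}, which guarantees each $V_i$ overlaps two distinct $S_{j_1}, S_{j_2}$ with total weight at least $\epsilon$, the factor $\frac{1}{8}$ falls out directly.

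Your approach instead defines $\pi(\ell) = \arg\min_j \|\z_\ell - \c^{(j)}\|_2$ and then uses the triangle inequality, which requires subtracting the third term $\|\z_{\ell(v)} - \c^{(\pi(\ell(v)))}\|_2$. You claim this is ``controlled by the defining property of $\pi$ together with a symmetric Markov step,'' but the $\arg\min$ property gives only a \emph{relative} statement ($\z_\ell$ is no farther from $\c^{(\pi(\ell))}$ than from any other ideal center), not an absolute bound. In particular, even when $\pi$ is a bijection, a given $\z_\ell$ may be far from \emph{all} ideal centers, and your case analysis only addresses the non-bijection possibility. To repair this you would need to argue, circularly, that a far-away $\z_\ell$ already makes the cost large because $A_\ell$ contains a chunk of $S_{\pi(\ell)}$ concentrated near $\c^{(\pi(\ell))}$---but your definition of $\pi$ is by center geometry, not by volume overlap, so there is no a priori guarantee that $A_\ell$ overlaps $S_{\pi(\ell)}$ substantially. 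The paper sidesteps all of this by taking arbitrary $\w_i$ and invoking (\ref{Exp: Inequality about p, q and r}); that is the idea you are missing, and it is what makes the proof clean. A secondary slip: your stated separation bound in case (a) uses $\max\{\mu(S_i), \mu(S_j)\}$ in the denominator, whereas Lemma \ref{Lemm: Well-clustered} establishes the stronger $1/\min\{\mu(S_i), \mu(S_j)\}$, which is exactly what cancels against the $\min\{\mu(S_i), \mu(S_p)\}$ weight in Lemma \ref{Lemm: Bound on the cost of g} to give the clean $\epsilon/8$.
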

The proof is given in Section \ref{Sec: Derivation of bounds}.
We can now prove Theorem \ref{Theo: Main} by contradiction:
The choice of large $\epsilon$ implies the contradiction that the lower bound exceeds the upper bound.
In the proof, we use the following inequality shown in Lemma 3.2 of \cite{Kol16}.
For the subsets $P$ and $Q$ of $V$, 
\begin{align}\label{Exp: Inequality about the number of edges}
  |E(P, V \setminus P)| \le |E(Q, V \setminus Q)| + \mu(P \triangle Q).
\end{align}

\begin{proof}[(Proof of Theorem \ref{Theo: Main})]
 \fbox{Part (a)}  \
 Choose a real number $\epsilon$ such that $\epsilon = 65 k \alpha / \Psi$.
 Obviously, $\epsilon$ is positive.
 Also, since $\Psi \ge 300 k \alpha  > 260 k \alpha$, we have $\epsilon = 65 k \alpha / \Psi < 1/4$.
 Hence, $\epsilon$ satisfies $0 \le \epsilon \le 1/4$.
 Let us assume that, for every permutation $\pi : \{1, \ldots, k\} \rightarrow \{1, \ldots, k\}$,
 there is an index $\ell$  such that $\mu(A_\ell \triangle S_{\pi(\ell)}) \ge 2 \epsilon \mu(S_{\pi(\ell)})$
 for a real number $\epsilon = 65 k \alpha / \Psi$.
 The graph $G$ satisfies $\Psi \ge 300k\alpha \ge k$.
 Hence, we can  apply Theorems \ref{Theo: Upper bound} and \ref{Theo: Lower bound}.
 Theorem \ref{Theo: Lower bound} tells us that 
 \begin{equation*}
  \COST
  \ge \frac{\epsilon}{8} - \frac{4 k }{\Psi}
   =  \left( \frac{65}{32} \right) \frac{4k\alpha}{\Psi} - \frac{4k}{\Psi}
  \ge \left( \frac{65}{32} \right) \frac{4k\alpha}{\Psi} - \frac{4k\alpha}{\Psi}
   =  \left( \frac{33}{32} \right) \frac{4k\alpha}{\Psi}.
 \end{equation*}
 We therefore reach a contradiction to  $ \COST  \le 4 k \alpha / \Psi$ shown in Theorem \ref{Theo: Upper bound}.
 The assumption is false.
 That is, after a suitable renumbering of $A_1, \ldots, A_k$, we have
 \begin{equation}
  \mu(A_i \triangle S_i) < 2 \epsilon  \mu(S_i)  = \frac{130 k \alpha }{\Psi} \mu(S_i)
   \label{Exp: SymDiffVol} 
 \end{equation}
 for every $i = 1, \ldots, k$.
 This gives the first inequality of part (a).
 Next, we  derive the second inequality.
 From inequalities (\ref{Exp: Inequality about the number of edges}) and (\ref{Exp: SymDiffVol}), 
 we have
 \begin{equation*}
  |E(A_i, V \setminus A_i)| \le |E(S_i, V \setminus S_i)| + \mu(A_i \triangle S_i) 
   < |E(S_i, V \setminus S_i)| + 2 \epsilon  \mu(S_i).
 \end{equation*}
 Also, by inequality (\ref{Exp: SymDiffVol}), 
 \begin{equation*}
  \mu(A_i) \ge \mu(A_i \cap S_i) = \mu(S_i) - \mu(S_i \setminus A_i)
   \ge \mu(S_i) - \mu(A_i \triangle S_i)
   > (1 - 2 \epsilon)  \mu(S_i).
 \end{equation*}
 Accordingly, we get
 \begin{equation*}
  \phi(A_i) = \frac{|E(A_i, V \setminus A_i)|}{\mu(A_i)}
   < \frac{1}{1 - 2 \epsilon} \phi(S_i) + \frac{2 \epsilon}{1 - 2 \epsilon}
   \le (1 + 4 \epsilon) \phi(S_i) + 4 \epsilon
   \le \biggl(1 + \frac{260 k \alpha }{\Psi} \biggr) \phi(S_i)  +  \frac{260 k \alpha }{\Psi}.
 \end{equation*}
 Here, the second inequality  uses the fact that $\epsilon$ satisfies $0 \le \epsilon \le 1/4$.
 This gives the second inequality of part (a).

 \fbox{Part (b)} \ 
 Choose a real number $\epsilon$ such that $\epsilon = 129 k \alpha  \beta / \Psi$.
 In light of $\Psi \ge 600 k \alpha \beta$, 
 we see that $\epsilon$ satisfies $0 \le \epsilon \le 1 / 4$.
 The proof is the same as in part (a).
\end{proof}

\section{Structure Theorem} \label{Sec: Structure theorem}
\subsection{Proof Strategy for Theorem \ref{Theo: Main}} \label{Subsec: Proof strategy}

Our proof of Theorem \ref{Theo: Main} follows the strategy developed by Peng et al.\ \cite{Pen17} 
for examining the performance of the spectral algorithm.
The structure theorem, shown in that paper, plays a key role in it.
Here, we give an overview of this proof strategy and explain the role of the theorem.
For a partition $\{V_1, \ldots, V_k\}$ of a graph $G = (V, E)$ and vectors $\w_1, \ldots, \w_k \in \Real^k$,
define a function $D$ by
\begin{align*} 
 D(V_1, \ldots, V_k, \w_1, \ldots, \w_k) = \sum_{i=1}^{k} \sum_{v \in V_i} d_v \|F(v) - \w_i \|_2^2
\end{align*}
where $d_v$ is the degree of node $v$ and $F$ is a spectral embedding map from $V$ to $\Real^k$.

Let a partition $\{S_1, \ldots, S_k\}$ of $G$ be $\phi_k(G)$-optimal.
The structure theorem of Peng et al.\ implies that there are $\c_1, \ldots, \c_k \in \Real^k$ such that
$\|\c_i - \c_j \|_2^2$ is lower bounded by some real number $\delta_{ij} \ge 0$ and 
$D(S_1, \ldots, S_k, \c_1, \ldots, \c_k)$ is upper bounded by some real number $\omega \ge 0$, i.e.,
\begin{align} \label{Exp: Well-clustered}
 \|\c_i - \c_j \|_2^2 \ge \delta_{ij} \ \mbox{for every} \ i \neq j \quad \mbox{and} \quad  
 D(S_1, \ldots, S_k, \c_1, \ldots, \c_k) \le \omega 
\end{align}
These inequalities imply that 
every point $F(v)$ with $v \in S_i$ is contained within a ball with center $\c_i$ and radius $\sqrt{\omega}$,
and the center $\c_i$ is at least $\sqrt{\delta_{ij}}$ away from  the center $\c_j$
of a ball of radius $\sqrt{\omega}$ that contains every point $F(v)$ for $ v \in S_j$.

In light of the inequalities of (\ref{Exp: Well-clustered}),
Peng et al.\ derived lower and upper bounds on $\COST$.
The upper bound is easy to obtain. 
We have  $\COST \le \alpha \cdot \OPT$ for 
the optimal clustering cost $\OPT$ of problem $\Prob(Y)$.
This is because the $k$-means clustering algorithm applied to $\Prob(Y)$
has an approximation ratio of $\alpha$. 
We thus see that the relation $\COST \le \alpha \cdot D(V_1, \ldots, V_k, \w_1, \ldots, \w_k)$
holds for any partition $\{V_1, \ldots, V_k\}$ of $G$ and any vectors $\w_1, \ldots, \w_k \in \Real^k$,
since $\OPT \le D(V_1, \ldots, V_k, \w_1, \ldots, \w_k)$.
Accordingly, the second inequality of (\ref{Exp: Well-clustered}) gives 
\begin{align*}
 \COST \le \alpha \cdot \omega.
\end{align*}
Deriving the lower bound needs more discussion.
We will only sketch the result here.
The details are explained in Section \ref{Sec: Derivation of bounds}.
Let a partition $\{A_1, \ldots, A_k\}$ of $G$ be the output of 
the $k$-means clustering algorithm.
We assume that, 
for every permutation $\pi : \{1, \ldots, k\} \rightarrow \{1, \ldots, k\}$,
there is an index $\ell \in \{1, \ldots, k\}$  such that
$\mu(A_\ell \triangle S_{\pi(\ell)}) \ge 2 \epsilon  \cdot \mu(S_{\pi(\ell)})$ holds 
for a real number $\epsilon$ satisfying $ 0 \le \epsilon \le 1/2$.
As will see in Section \ref{Sec: Derivation of bounds}, 
this assumption implies that $A_i$ must overlap two of those $S_1, \ldots, S_k$.
Since the points in $S_{j_1}$ are  away from those in $S_{j_2}$ with $j_1 \neq j_2$,  
$\COST$ becomes large.
Indeed,  we can show that, under the assumption, 
\begin{align*}
 \COST \ge \frac{1}{8} \sum_{i \in L} \xi_i \cdot \delta_{i p} \cdot
 \min \{ \mu(S_i), \mu(S_p) \} - \omega
\end{align*}
holds, where $L$ is a subset of $\{1, \ldots, k\}$; $p$ is an element of $\{1, \ldots, k\}$; and
$\xi_i$ is a nonnegative real number satisfying $\sum_{i \in L} \xi_i \ge \epsilon$
for the $\epsilon$ of the assumption.

The above results tell us that a larger $\delta_{ij}$ and  smaller $\omega$ provide
tighter lower and upper bounds on $\COST$.
In Section \ref{Subsec: Extension},
we show an extension of the structure theorem, and then, present a larger $\delta_{ij}$ and smaller $\omega$ 
than those derived by  Peng et al.\ \cite{Pen17} and Kolev and Mehlhorn \cite{Kol16}.
Table \ref{Tab: delta and omega} compares the values of $\delta_{ij}$ and $\omega$ in a case of $F = \SmMap$.
Deriving $\delta_{ij}$ and $\omega$ needs a condition to be put on $\Upsilon$ or $\Psi$.
Table \ref{Tab: delta and omega} includes these conditions.
From the table, we see that our values of $\delta_{ij}$ and $\omega$ are better than 
those of Peng et al.\ and Kolev and Mehlhorn, and 
our condition is weaker than theirs.
This is why our analysis provides a better approximation guarantee for the spectral  algorithm
under a weaker gap assumption.

  \begin{table}[h]
   \caption{Values of $\delta_{ij}$ and $\omega$ and conditions required for deriving those values
   in the case of $F = \SmMap$.
   The result of Peng et al.\ is from Lemmas 4.1 and 4.3 of \cite{Pen17}
   and that of Kolev and Mehlhorn is from Lemmas 2.3 and 3.1 of \cite{Kol16};
   our result is from Lemma \ref{Lemm: Well-clustered} of this paper.
   }
   \label{Tab: delta and omega}
  \centering  
  \begin{tabular}{lccc}
  \toprule
   & $\delta_{ij}$
   & $\omega$
   &  Condition \\
  \midrule       
   Peng et al.\ 
   & $\frac{1}{10^3 \cdot k \cdot \min \{ \mu(S_i), \mu(S_j) \} }$
   & $\frac{1.1k^2}{\Upsilon}$
   & $\Upsilon = \Omega(k^3) $ \\
   \midrule    
   Kolev and Mehlhorn  
   & $\frac{1}{2 \min \{\mu(S_i), \mu(S_j) \}}$    
   & $\frac{2k^2}{\Psi} $
   &  $\Psi = \Omega(k^3) $   \\
   \midrule    
   Our result 
   & $\frac{1}{\min \{\mu(S_i), \mu(S_j) \}}$       
   & $\frac{4k}{\Psi}$ 
   &  $\Psi = \Omega(k)$ \\
   \bottomrule  
  \end{tabular}
 \end{table}

\subsection{Extension} \label{Subsec: Extension}

Here, we describe the structure theorem of Peng et al. and its extension.
For a partition $\{S_1, \ldots, S_k\}$ of a graph $G$,
define the {\it indicator} $\g_i \in \Real^n$ of $S_i$
to be the vector whose $v$th element is one if $v \in S_i$ and zero otherwise.
Each nonzero position of $\g_i$ corresponds to  nodes contained in $S_i$,
and hence, $\g_i$ indicates $S_i$ among $S_1, \ldots, S_k$.
The {\it normalized indicator} $\bar{\g}_i \in \Real^n$ of $S_i$ is given as 
\begin{align*}
 \bar{\g}_i = \frac{\D^{1/2} \g_i}{\|\D^{1/2} \g_i\|_2 }
\end{align*}
for the degree matrix $\D$ of  $G$.
Since $\|\D^{1/2} \g_i\|_2$ is equivalent to  $\sqrt{\mu(S_i)}$,
the $v$th element of $\bar{\g}_i$ is $\sqrt{d_v / \mu(S_i)}$ if $v \in S_i$ and zero otherwise.

Let a partition $\{S_1, \ldots, S_k\}$ of $G$ be $\phi_k(G)$-optimal.
We take the normalized indicators $\bar{\g}_1, \ldots, \bar{\g}_k \in \Real^n$ of those $S_1, \ldots, S_k$.
We also take the bottom $k$ eigenvectors $\f_1, \ldots, \f_k \in \Real^n$ of the normalized Laplacian $\LC$ of $G$.
Peng et al.\ proved the structure theorem (Theorem 3.1 of \cite{Pen17}), which states the following:
Let $\Upsilon = \Omega(k^2)$ and $1 \le i \le k $. Then, there is a linear combination $\hat{\g}_i$ of
$\bar{\g}_1, \ldots, \bar{\g}_k$ such that $ \|\f_i - \hat{\g}_i \|_2 \le \sqrt{1.1k / \Upsilon}$.
We can extend the theorem further.
\begin{theo}\label{Theo: Structure theorem}
 Let a graph $G$ satisfy $\Psi \ge  0$.
 Let a partition $\{S_1, \ldots, S_k\}$ of $G$ be $\bar{\phi}_k(G)$-optimal.
 Form $\bar{\G} = [\bar{\g}_1, \ldots, \bar{\g}_k] \in \Real^{n \times k}$
 for the normalized indicators $\bar{\g}_1, \ldots, \bar{\g}_k$ of $S_1, \ldots, S_k$.
 Form $\F = [\f_1, \ldots, \f_k] \in \Real^{n \times k}$ for the bottom $k$ eigenvectors $\f_1, \ldots, \f_k$
 of the normalized Laplacian  of  $G$.
 Then, there is some $k \times k$ orthogonal matrix $\U$ such that
 \begin{align*} 
  \|\F \U - \bar{\G} \|_F \le  k / \Psi + \sqrt{k / \Psi}. 
 \end{align*}
\end{theo}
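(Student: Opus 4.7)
The plan is to expand the normalized indicators in the eigenbasis of $\LC$, bound the component orthogonal to $\F$ using a conductance-based Rayleigh quotient, and then align the remaining in-subspace part with $\bar{\G}$ via an orthogonal rotation chosen by SVD. Concretely, I would extend $\f_1, \ldots, \f_k$ to a full orthonormal basis $\f_1, \ldots, \f_n$ of $\Real^n$, set $\F_\perp = [\f_{k+1}, \ldots, \f_n]$, and write
\[ \bar{\G} = \F \A + \F_\perp \B \]
where $\A \in \Real^{k \times k}$ and $\B \in \Real^{(n-k) \times k}$ collect the coefficients of $\bar{\g}_1, \ldots, \bar{\g}_k$. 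Because $S_1, \ldots, S_k$ are disjoint, the columns of $\bar{\G}$ have disjoint supports; together with $\|\bar{\g}_i\|_2 = 1$ this yields $\bar{\G}^\trans \bar{\G} = \I_k$, so $\A^\trans \A + \B^\trans \B = \I_k$, and in particular $\|\A\|_F^2 + \|\B\|_F^2 = k$.

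To bound $\|\B\|_F^2$, I would use the identity $\bar{\g}_i^\trans \LC \bar{\g}_i = \phi(S_i)$, which follows from $\bar{\g}_i = \D^{1/2} \mathbf{1}_{S_i}/\sqrt{\mu(S_i)}$ and $\LC = \D^{-1/2} \L \D^{-1/2}$, so that $\bar{\g}_i^\trans \LC \bar{\g}_i = \mathbf{1}_{S_i}^\trans \L \mathbf{1}_{S_i} / \mu(S_i) = |E(S_i, V \setminus S_i)|/\mu(S_i)$. Summing over $i$ and invoking $\bar{\phi}_k$-optimality gives $\sum_i \bar{\g}_i^\trans \LC \bar{\g}_i = k \bar{\phi}_k(G)$. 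On the other hand, expanding each $\bar{\g}_i$ in the eigenbasis,
\[ \sum_{i=1}^k \bar{\g}_i^\trans \LC \bar{\g}_i = \sum_{i=1}^k \sum_{j=1}^n \lambda_j (\f_j^\trans \bar{\g}_i)^2 \ge \lambda_{k+1} \|\B\|_F^2, \]
so $\|\B\|_F^2 \le k \bar{\phi}_k(G) / \lambda_{k+1} = k/\Psi$.

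For the choice of $\U$, I would take the SVD $\A = \hat{\U} \Sigma \hat{\V}^\trans$ with $\Sigma = \mathrm{diag}(\sigma_1, \ldots, \sigma_k)$; the orthogonal matrix $\U = \hat{\U} \hat{\V}^\trans$ attains $\min \|\U - \A\|_F^2 = \sum_i (1 - \sigma_i)^2$. Because $\A^\trans \A \preceq \I_k$, each $\sigma_i \in [0, 1]$, so $1 - \sigma_i \le 1 - \sigma_i^2$, and hence $\sum_i (1 - \sigma_i) \le k - \|\A\|_F^2 = \|\B\|_F^2 \le k/\Psi$. Since all summands are nonnegative, $\sum_i (1 - \sigma_i)^2 \le \bigl( \sum_i (1 - \sigma_i) \bigr)^2 \le (k/\Psi)^2$, which gives $\|\U - \A\|_F \le k/\Psi$. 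Finally, the orthogonality relation $\F^\trans \F_\perp = 0$ yields $\|\F \U - \bar{\G}\|_F^2 = \|\U - \A\|_F^2 + \|\B\|_F^2$, and $\sqrt{a^2 + b^2} \le a + b$ for $a, b \ge 0$ delivers the stated bound $\|\F \U - \bar{\G}\|_F \le k/\Psi + \sqrt{k/\Psi}$.

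The main obstacle is the last step: applying $(1 - \sigma_i)^2 \le 1 - \sigma_i^2$ termwise would only give $\|\U - \A\|_F \le \sqrt{k/\Psi}$ and hence the weaker overall bound $2\sqrt{k/\Psi}$. The nonlinear trick $\sum(1 - \sigma_i)^2 \le (\sum(1 - \sigma_i))^2$, valid because every $1 - \sigma_i \ge 0$, is what extracts the sharper $k/\Psi$ on the rotation error and produces the tightened constant that ultimately drives the improved performance guarantee over Peng et al.\ and Kolev--Mehlhorn.
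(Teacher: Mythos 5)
Your proof is correct and follows essentially the same approach as the paper: expand $\bar{\g}_i$ in the eigenbasis, bound the energy outside the bottom-$k$ subspace via the Rayleigh quotient and $\bar{\phi}_k(G)$-optimality to get $\|\B\|_F^2 \le k/\Psi$, then rotate the in-subspace coefficient matrix onto an orthogonal matrix via SVD. The only cosmetic differences are that you invoke $\bar{\G}^\trans\bar{\G} = \I_k$ directly in place of the paper's Lemma~\ref{Lemm: C is close to being orthogonal} ($\Y^\trans\Y = \I - \C^\trans\C$, with $\C = \A$ and $\Y = -\F_\perp\B$ in your notation, so these are the same fact), and you make the Pythagorean split $\|\F\U - \bar{\G}\|_F^2 = \|\U-\A\|_F^2 + \|\B\|_F^2$ explicit where the paper uses the triangle inequality $\|\F\U - \hat{\F}\|_F + \|\hat{\F}-\bar{\G}\|_F$; your chain $\sqrt{\sum(1-\sigma_i)^2} \le \sum(1-\sigma_i) \le \sum(1-\sigma_i^2) \le k/\Psi$ is a mild rearrangement of the paper's $\|\I-\Sigmab\|_F \le \|\I-\Sigmab^2\|_F = \|\Y^\trans\Y\|_F \le \|\Y\|_F^2 \le k/\Psi$, relying on the same two facts ($\sigma_i \in [0,1]$ and nonnegativity of the summands).
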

The proof is given in Section \ref{Sec: Extended theorem}.
The relation $\sqrt{x} \ge x$ holds for $0 \le x \le 1$.
Hence, if $\Psi \ge k$, we can simplify the inequality of the theorem as 
 \begin{align} 
  \|\F \U - \bar{\G} \|_F \le  2 \sqrt{k / \Psi}.
  \label{Exp: bound on FU - barG}
 \end{align}
 From the theorem, we immediately obtain the following corollary.
 This is because, if $\{S_1, \ldots, S_k\}$ is $\bar{\phi}_k(G)$-optimal, it is $\phi_k(G)$-optimal;
 $\|\A\|_2 \le \|\A\|_F$ holds for any matrix $\A$; and  $\Psi \ge \Upsilon$ holds.
\begin{coro}
 Let a graph $G$ satisfy $\Upsilon > 0$.
 Let a partition $\{S_1, \ldots, S_k\}$ of $G$ be $\phi_k(G)$-optimal.
 Form $\bar{\G} = [\bar{\g}_1, \ldots, \bar{\g}_k] \in \Real^{n \times k}$
 for the normalized indicators $\bar{\g}_1, \ldots, \bar{\g}_k$ of $S_1, \ldots, S_k$.
 Form $\F = [\f_1, \ldots, \f_k] \in \Real^{n \times k}$ for the bottom $k$ eigenvectors $\f_1, \ldots, \f_k$
 of the normalized Laplacian  of  $G$.
 Then, there is some $k \times k$ orthogonal matrix $\U$ such that
 \begin{align*} 
  \|\F \U - \bar{\G} \|_2 \le  k / \Upsilon + \sqrt{k / \Upsilon}. 
 \end{align*}
\end{coro}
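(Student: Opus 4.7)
The plan is to decompose $\bar{\G}$ in the orthonormal eigenbasis of $\LC$ and to choose $\U$ as the polar factor of the projection of $\bar{\G}$ onto the bottom eigenspace, exploiting the block complementarity of this decomposition at the matrix level.

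First, I would observe that $\bar{\g}_1, \ldots, \bar{\g}_k$ are orthonormal in $\Real^n$ because the sets $S_i$ partition $V$ (so the supports are disjoint) and each $\bar{\g}_i$ has unit length by construction; hence $\bar{\G}^\trans \bar{\G} = \I$. Writing $\F_\perp = [\f_{k+1}, \ldots, \f_n]$, I decompose $\bar{\G} = \F \M_1 + \F_\perp \M_2$ with $\M_1 = \F^\trans \bar{\G}$ and $\M_2 = \F_\perp^\trans \bar{\G}$. The orthonormality of $\bar{\G}$ yields the key identity $\M_1^\trans \M_1 + \M_2^\trans \M_2 = \I$.

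Second, a routine Rayleigh-quotient computation gives $\bar{\g}_i^\trans \LC \bar{\g}_i = |E(S_i, V \setminus S_i)|/\mu(S_i) = \phi(S_i)$, so by $\bar{\phi}_k(G)$-optimality of $\{S_i\}$, $\trace(\bar{\G}^\trans \LC \bar{\G}) = k \bar{\phi}_k(G)$. Expanding this trace in the eigenbasis and discarding the nonnegative contribution of the first $k$ eigenvalues yields $\lambda_{k+1} \|\M_2\|_F^2 \le k \bar{\phi}_k(G)$, i.e.\ $\|\M_2\|_F^2 \le k/\Psi$.

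Third, because the columns of $\F$ and $\F_\perp$ are mutually orthonormal, a Pythagorean identity gives, for every orthogonal $\U$, the clean split $\|\F \U - \bar{\G}\|_F^2 = \|\U - \M_1\|_F^2 + \|\M_2\|_F^2$. Minimizing the first term over orthogonal matrices is the classical orthogonal Procrustes problem, solved by the polar factor of $\M_1$; the optimum equals $\sum_{i=1}^{k} (1 - \sigma_i)^2$, where $\sigma_1, \ldots, \sigma_k$ are the singular values of $\M_1$. The complementarity $\M_1^\trans \M_1 + \M_2^\trans \M_2 = \I$ forces these to satisfy $\sigma_i = \sqrt{1 - \rho_i}$ with $\rho_1, \ldots, \rho_k$ the eigenvalues of $\M_2^\trans \M_2$, so $\sum_i \rho_i = \|\M_2\|_F^2 \le k/\Psi$ and $\rho_i \in [0,1]$. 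Two elementary inequalities, $1 - \sqrt{1 - \rho_i} \le \rho_i$ and $\sum_i \rho_i^2 \le (\sum_i \rho_i)^2$ (valid because each $\rho_i \le \sum_j \rho_j$), then bound the Procrustes optimum by $(k/\Psi)^2$. Combining with the bound on $\|\M_2\|_F^2$ and using $\sqrt{a^2 + b^2} \le a + b$ for $a, b \ge 0$ gives the claim.

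The main obstacle is the last step. Previous analyses apply a Davis--Kahan-style perturbation to each $\f_i$ individually and pay an extra factor of $k$ when combining them into a Frobenius bound. The improvement here comes from staying at the matrix level: the identity $\M_1^\trans \M_1 + \M_2^\trans \M_2 = \I$ together with rotation by the polar factor of $\M_1$ converts the $\ell_1$-type trace bound $\sum_i \rho_i \le k/\Psi$ directly into the $\ell_2$-type Procrustes bound $\sum_i (1-\sigma_i)^2 \le (k/\Psi)^2$, rather than into the much weaker $k \cdot (k/\Psi)$.
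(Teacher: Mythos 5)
Your argument is correct and follows essentially the same route as the paper's proof of Theorem~\ref{Theo: Structure theorem}, from which this corollary is derived: you decompose $\bar{\G}$ onto the bottom-$k$ eigenspace and its complement (the paper's $\hat{\F} = \F\C$ and $\Y = \hat{\F} - \bar{\G}$ are your $\F\M_1$ and $-\F_\perp\M_2$, so $\M_1 = \C$ and $\|\Y\|_F = \|\M_2\|_F$), obtain $\|\M_2\|_F^2 \le k/\Psi$ from the Rayleigh quotient, establish $\M_1^\trans\M_1 + \M_2^\trans\M_2 = \I$ (Lemma~\ref{Lemm: C is close to being orthogonal}), and take $\U$ to be the polar factor $\A\B^\trans$ of $\M_1$. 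The only cosmetic difference is that the paper bounds $\|\F\U - \bar{\G}\|_F$ by triangle inequality through $\hat{\F}$, while you use the exact Pythagorean split $\|\F\U - \bar{\G}\|_F^2 = \|\U - \M_1\|_F^2 + \|\M_2\|_F^2$; both relax to $k/\Psi + \sqrt{k/\Psi}$.

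Two small mismatches with the corollary's statement should be fixed. First, its hypothesis is $\phi_k(G)$-optimality, not $\bar{\phi}_k(G)$-optimality, so the trace step should read $\lambda_{k+1}\|\M_2\|_F^2 \le \sum_i \phi(S_i) \le k\,\phi_k(G)$, giving $\|\M_2\|_F^2 \le k/\Upsilon$; as written, your argument instead proves Theorem~\ref{Theo: Structure theorem} (you invoke $\bar{\phi}_k(G)$ and $\Psi$), and that alone does not cover a $\phi_k(G)$-optimal partition that is not $\bar{\phi}_k(G)$-optimal. Second, your chain yields a Frobenius-norm bound; add the observation $\|\cdot\|_2 \le \|\cdot\|_F$ to deliver the stated $2$-norm inequality.
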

The corollary implies the followings:
Let $\Upsilon \ge k$ and $1 \le i \le k $. Then, there is a linear combination $\hat{\g}_i$ of
$\bar{\g}_1, \ldots, \bar{\g}_k$ such that $ \|\f_i - \hat{\g}_i \|_2 \le 2\sqrt{k / \Upsilon}$.
We see that the condition on $\Upsilon$ imposed by the corollary
is weaker than that by the structure theorem of  Peng et al.

We are now ready for deriving the bounds $\delta_{ij}$ and $\omega$ shown in (\ref{Exp: Well-clustered}).
Let $\P = \F^\trans \in \Real^{k \times n}$ and $\Q = \bar{\G}^\trans \in \Real^{k \times n}$.
Inequality of (\ref{Exp: bound on FU - barG}) can be rewritten as
\begin{align*}
\| \P - \U\Q \|_F \le 2 \sqrt{k / \Psi}.
\end{align*}
Let  $v$ belong to $S_i$. Then, the $v$th column $\q_v$ of $\Q$  is given as 
\begin{align*}
 \q_v = \sqrt{\frac{d_v}{\mu(S_i)}} \e_i.
\end{align*}
Here, $\e_i$ denotes the $i$th unit vector in $\Real^k$.
Accordingly, we can express $\|\P - \U\Q \|_F^2$ as 
 \begin{align} \label{Exp: Bound on P - UQ}
 \| \P - \U \Q\|_F^2 =  \sum_{i=1}^{k} \sum_{v \in S_i} \left\| \p_v  - \sqrt{\frac{d_v}{\mu(S_i)}} \u_i \right\|_2^2
 \le  \frac{4k}{\Psi}
\end{align}
using the $v$th column $\p_v$ of $\P$ and the $i$th column $\u_i$ of $\U$.
For a partition $\{S, \ldots, S_k\}$ of $G$, define two types of centroid of $S_i$ by 
\begin{align*}
 \SmCenter(S_i) = \frac{1}{\sqrt{\mu(S_i)}} \u_i \quad \mbox{and}  \quad  \NjwCenter(S_i) = \u_i
\end{align*}
for the $i$th column $\u_i$ of $\U$.
Lemma \ref{Lemm: Well-clustered}  presents the values of the bounds $\delta_{ij}$ and $\omega$ in (\ref{Exp: Well-clustered}).
The proof of Lemma \ref{Lemm: Well-clustered} uses Lemma \ref{Lemm: Bound on the distance} below.
\begin{lemm}\label{Lemm: Bound on the distance}
 The following inequality holds for  a vector $\a \in \Real^k$ and a vector $\u \in \Real^k$ with $\|\u\|_2 = 1$:
 \begin{align*}
  \left\| \frac{\a}{\|\a\|_2} - \u \right\|_2 \le 2 \| \a - \u \|_2.
 \end{align*}
\end{lemm}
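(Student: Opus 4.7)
The plan is to reduce the inequality to two applications of the (forward and reverse) triangle inequality, using the unit vector $\a/\|\a\|_2$ as the natural intermediate point between $\a$ and $\u$.

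First, I would apply the triangle inequality with $\a$ inserted as the intermediate point, writing
\begin{equation*}
\left\| \frac{\a}{\|\a\|_2} - \u \right\|_2 \le \left\| \frac{\a}{\|\a\|_2} - \a \right\|_2 + \|\a - \u\|_2.
\end{equation*}
The first term on the right collapses cleanly: $\frac{\a}{\|\a\|_2} - \a = (1 - \|\a\|_2) \cdot \frac{\a}{\|\a\|_2}$, whose norm equals $|1 - \|\a\|_2|$. Since $\|\u\|_2 = 1$, this is the same as $|\, \|\u\|_2 - \|\a\|_2 \,|$.

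Second, I would invoke the reverse triangle inequality to bound this scalar gap by the full vector distance: $|\, \|\u\|_2 - \|\a\|_2 \,| \le \|\a - \u\|_2$. Combining the two displayed bounds gives the factor of $2$ claimed in the lemma.

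There is essentially no hard step here; the only minor care needed is the implicit assumption that $\a \neq \zero$ so that $\a/\|\a\|_2$ is defined (the statement is vacuous otherwise, and in the applications $\a$ will be a column $\p_v$ close to the unit vector $\u_i$, hence nonzero whenever the bound from Theorem~\ref{Theo: Structure theorem} is meaningful). The usefulness of the lemma in what follows is that it lets us transfer a Frobenius-type bound on $\p_v - \sqrt{d_v/\mu(S_i)}\,\u_i$ (relevant for $F = \SmMap$) into a bound on $\p_v/\|\p_v\|_2 - \u_i$ (relevant for $F = \NjwMap$), at the cost of only a constant factor.
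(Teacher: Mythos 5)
Your proposal is correct and takes essentially the same route as the paper: both start by inserting $\a$ between $\a/\|\a\|_2$ and $\u$ via the triangle inequality, then bound $\bigl\|\a/\|\a\|_2-\a\bigr\|_2=|1-\|\a\|_2|$ by $\|\a-\u\|_2$. The only cosmetic difference is that you invoke the reverse triangle inequality for this last step while the paper re-derives the same bound inline from Cauchy--Schwarz ($\a^\trans\u\le\|\a\|_2$).
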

\begin{proof}
 We can bound the left-side norm from above as 
\begin{align*}
 \left\| \frac{\a}{\|\a\|_2} - \u \right\|_2
 & \le  \left\| \frac{\a}{\|\a\|_2} - \a \right\|_2  + \left\| \a - \u \right\|_2. 
\end{align*}
 Since $\|\u\|_2 = 1$, 
 the Cauchy-Schwarz inequality implies $(\a^\trans \u)^2 \le \|\a\|_2^2 \|\u\|_2^2 = \|\a\|_2^2 $.
 Hence, we have $\a^\trans \u \le |\a^\trans \u| \le \|\a\|_2$.
 In light of this relation, we have
 \begin{align*}
  \left\| \frac{\a}{\|\a\|_2} - \a \right\|_2^2
   =  \| \a \|^2_2 - 2 \| \a \|_2 + 1 
   \le  \| \a \|^2_2 - 2  \a^\trans \u  + 1 
   =  \| \a - \u \|_2^2,
 \end{align*}
 from which the lemma follows.
\end{proof}
\begin{lemm}\label{Lemm: Well-clustered} 
 Let a graph $G$ satisfy $\Psi \ge k$.
 Let a partition $\{S_1, \ldots, S_k\}$ of $G$ be  $\bar{\phi}_k(G)$-optimal.
 The following statements hold.
 \begin{enumerate}[{\normalfont (a)}]
  \item Consider the case of $F = \SmMap$.
	Let $\c_i = \SmCenter(S_i)$ for $i = 1, \ldots, k$. Then, 
	\begin{itemize}
	 \item  $\displaystyle \|\c_i - \c_j \|_2^2 \ge \frac{1}{\min \{\mu(S_i), \mu(S_j)\} }$ for every $i \neq j$, 
	 \item  $\displaystyle D(S_1, \ldots, S_k, \c_1, \ldots, \c_k) \le \frac{4k}{\Psi}$.
	\end{itemize}

  \item Consider the case of $F = \NjwMap$.
	Let  $\c_i = \NjwCenter(S_i)$ for $i = 1, \ldots, k$. Then, 
	\begin{itemize}
	 \item  $\displaystyle \|\c_i - \c_j \|_2^2 = 2 $ for every $i \neq j$, 
	 \item  $\displaystyle D(S_1, \ldots, S_k, \c_1, \ldots, \c_k) \le \frac{16 k \mu_{\mmax}}{\Psi}$.
	\end{itemize}
 \end{enumerate}
\end{lemm}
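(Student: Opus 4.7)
The overall plan is to let both parts cascade from the single inequality (\ref{Exp: Bound on P - UQ}), together with orthonormality of the columns $\u_1, \ldots, \u_k$ of $\U$. The separation bounds on $\|\c_i - \c_j\|_2^2$ need only this orthonormality; the cost bounds require unwrapping the definitions of $\SmMap$ and $\NjwMap$ so that the squared distances line up with the summand in (\ref{Exp: Bound on P - UQ}).

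For part (a), I would first expand
\[
\|\SmCenter(S_i) - \SmCenter(S_j)\|_2^2 = \frac{1}{\mu(S_i)} + \frac{1}{\mu(S_j)} - \frac{2\, \u_i^\trans \u_j}{\sqrt{\mu(S_i) \mu(S_j)}},
\]
and use $\u_i^\trans \u_j = 0$ and the trivial fact that discarding one of the two positive reciprocals leaves at least $1/\min\{\mu(S_i), \mu(S_j)\}$. For the cost bound, note that $d_v \|\SmMap(v) - \c_i\|_2^2 = d_v\, \|\p_v/\sqrt{d_v} - \u_i/\sqrt{\mu(S_i)}\|_2^2 = \|\p_v - \sqrt{d_v/\mu(S_i)}\, \u_i\|_2^2$, so that $D(S_1, \ldots, S_k, \c_1, \ldots, \c_k)$ is literally the left-hand side of (\ref{Exp: Bound on P - UQ}), and the bound $4k/\Psi$ follows immediately.

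For part (b), the separation is direct from orthonormality: $\|\NjwCenter(S_i) - \NjwCenter(S_j)\|_2^2 = \|\u_i\|_2^2 + \|\u_j\|_2^2 - 2\, \u_i^\trans \u_j = 2$. The cost bound is where the real work lies. Here I would set $\a = \sqrt{\mu(S_i)/d_v}\, \p_v$, so that $\a / \|\a\|_2 = \p_v/\|\p_v\|_2 = \NjwMap(v)$, and then apply Lemma \ref{Lemm: Bound on the distance} with $\u = \u_i$ to obtain
\[
\|\NjwMap(v) - \u_i\|_2 \le 2 \sqrt{\mu(S_i)/d_v}\, \bigl\|\p_v - \sqrt{d_v/\mu(S_i)}\, \u_i\bigr\|_2.
\]
Squaring, multiplying through by $d_v$, and using $\mu(S_i) \le \mu_{\mmax}$ leaves a factor $4 \mu_{\mmax}$ in front of the summand in (\ref{Exp: Bound on P - UQ}); summing over $v \in S_i$ and $i$ then yields $16 k \mu_{\mmax}/\Psi$.

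The main obstacle, or at least the delicate point, is the choice of this rescaled $\a$: the naive choice $\a = \p_v$ would not align with the factor $\sqrt{d_v/\mu(S_i)}\, \u_i$ appearing in (\ref{Exp: Bound on P - UQ}), and rescaling $\a$ so that $\a/\|\a\|_2$ still equals $\NjwMap(v)$ while $\a$ itself pairs cleanly with $\u_i$ is precisely what converts the bound from part (a) into part (b), and is also what introduces the $\mu_{\mmax}$ dependence that distinguishes the two cases.
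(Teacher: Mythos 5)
Your proposal is correct and follows essentially the same route as the paper: both parts reduce to inequality (\ref{Exp: Bound on P - UQ}) plus orthonormality of the $\u_i$, and your rescaled vector $\a = \sqrt{\mu(S_i)/d_v}\,\p_v$ in part (b) is exactly the quantity the paper names $\MsmMap(v)$ before applying Lemma \ref{Lemm: Bound on the distance}. The only difference is notational — the paper introduces $\MsmMap$ as an intermediate map while you work with $\a$ directly.
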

\begin{proof}
 \fbox{Part (a)} \ We have
 \begin{align*}
  \|\c_i - \c_j \|_2^2
   = \left\| \frac{1}{\sqrt{\mu(S_i)}}\u_i - \frac{1}{\sqrt{\mu(S_j)}}\u_j \right\|_2^2  
   =  \frac{1}{\mu(S_i)} + \frac{1}{\mu(S_j)} 
   \ge \frac{1}{ \min \{ \mu(S_i), \mu(S_j) \}}.
 \end{align*}
 Also, in light of inequality (\ref{Exp: Bound on P - UQ}), we have
 \begin{align*}
  D(S_1, \ldots, S_k, \c_1, \ldots, \c_k) 
   = \sum_{i=1}^{k} \sum_{v \in S_i} d_v \left\| \SmMap(v) - \SmCenter(S_i) \right\|_2^2 
   = \sum_{i=1}^{k} \sum_{v \in S_i}  \left\|  \p_v - \sqrt{\frac{d_v}{\mu(S_i)}} \u_i \right\|_2^2 
   \le \frac{4k}{\Psi}.
 \end{align*}

 \fbox{Part (b)} \ We have $\|\c_i - \c_j\|_2^2 = \|\u_i - \u_2 \|_2^2 = 2$.
 Define the modification $\MsmMap$ of the map $\SmMap$ by
 \begin{align*}
  \MsmMap(v) = \sqrt{\mu(S_i)} \cdot \SmMap(v)
 \end{align*}
 for $v \in S_i$.
 Using the above modification, we can express $\NjwMap(v)$ as $\NjwMap(v) = \MsmMap(v) / \|\MsmMap(v)\|_2$.
 From Lemma \ref{Lemm: Bound on the distance} and part (a) of this lemma, we find that 
 \begin{align*}
      D(S_1, \ldots, S_k, \c_1, \ldots, \c_k) 
  & = \sum_{i=1}^{k} \sum_{v \in S_i}     d_v \left\| \NjwMap(v) - \NjwCenter(S_i) \right\|_2^2 \\
  & = \sum_{i=1}^{k} \sum_{v \in S_i}     d_v \left\| \frac{\MsmMap(v)}{\| \MsmMap(v) \|_2} - \u_i \right\|_2^2 \\
  & \le 4 \sum_{i=1}^{k} \sum_{v \in S_i} d_v \left\| \MsmMap(v) - \u_i \right\|_2^2 \\
  & = 4 \sum_{i=1}^{k} \sum_{v \in S_i} \mu(S_i)  d_v \left\| \SmMap(v) - \SmCenter(S_i) \right\|_2^2 \\
  & \le \frac{16 k \mu_{\mmax}}{\Psi}.
  \end{align*}
\end{proof}

As explained in Section \ref{Subsec: Proof strategy},
it is desirable that the lower bound on $\|\c_i - \c_j \|_2^2$ is as large as possible
and upper bound on $D(S_1, \ldots, S_k, \c_1, \ldots, \c_k)$ is as small as possible,
since they provide tight bounds on $\COST$.
We can see from Lemma \ref{Lemm: Well-clustered} that 
the case of $F = \SmMap$ is inferior to the case of $F = \NjwMap$ in terms of the lower bound on $\|\c_i - \c_j \|_2^2$, 
while the case of $F = \SmMap$ is superior to the case of $F = \NjwMap$
in terms of the upper bound on $D(S_1, \ldots, S_k, \c_1, \ldots, \c_k)$.
We can also see from Lemma \ref{Lemm: Well-clustered} how points $F(v)$ are located in $\Real^k$.
The lemma tells us that every point $F(v)$ for $v \in S_\ell$ is contained in a ball $B_\ell$ in $\Real^k$ with the following properties:
\begin{itemize}
 \item In a case of $F = \SmMap$, 
       the radius of $B_\ell$ is $2 \sqrt{\frac{k}{\Psi}}$ and the center of $B_\ell$ is at $\frac{1}{\sqrt{\mu(S_\ell)}}\u_\ell$.
       Let $\c_\ell$ denote the center of $B_\ell$.
       Then, $\c_{\ell_1}$ is orthogonal to $\c_{\ell_2}$ for any $\ell_1 \neq \ell_2$ 
       and the distance between them is at least $\frac{1}{\sqrt{\mu_{\mmax}}}$.

 \item In a case of $F = \NjwMap$, the radius of $B_\ell$ is $4 \sqrt{\frac{k \mu_{\mmax}}{\Psi}}$
       and the center  of $B_\ell$ is at $\u_\ell$.
       Let $\c_\ell$ denote the center of $B_\ell$.
       Then, $\c_{\ell_1}$ is orthogonal to $\c_{\ell_2}$ for any $\ell_1 \neq \ell_2$ 
       and the distance between them is $\sqrt{2}$.

\end{itemize}

\section{Proofs of Theorems \ref{Theo: Upper bound} and \ref{Theo: Lower bound}} \label{Sec: Derivation of bounds}

We first prove Theorem \ref{Theo: Upper bound}, which provides an upper bound on $\COST$.

\begin{proof}[(Proof of Theorem \ref{Theo: Upper bound})]
 As explained in Section \ref{Subsec: Proof strategy}, 
 the relation 
 \begin{align*}
  \COST \le \alpha \cdot \OPT \le \alpha \cdot D(V_1, \ldots, V_k, \w_1, \ldots, \w_k)
 \end{align*}
 holds for any partition $\{V_1, \ldots, V_k\}$ of a graph $G$ and any vectors $\w_1, \ldots, \w_k \in \Real^k$.
 Here, $\OPT$ is the optimal clustering cost of problem $\Prob(Y)$ and 
 $\alpha$ is the approximation factor of the $k$-means clustering algorithm applied to $\Prob(Y)$.
 Accordingly, Lemma \ref{Lemm: Well-clustered} implies inequalities (a) and (b) of this theorem.
\end{proof}

Next, we prove Theorem \ref{Theo: Lower bound}, which provides a lower bound on $\COST$.
The proof uses Lemma 4.6 of \cite{Pen17} by Peng et al., which 
was also used in Kolev and Mehlhorn's analysis,
where it was stated as Lemma 26 of \cite{Kol18} and called the volume overlap lemma.
We restate it here as Lemma \ref{Lemm: volume overlap} before proving Lemma \ref{Lemm: Bound on the cost of g}.
Let $G = (V, E)$ be a graph.
In Lemmas \ref{Lemm: volume overlap} and \ref{Lemm: Bound on the cost of g},
we let $\{S_1, \ldots, S_k\}$ be a $\bar{\phi}_k(G)$-optimal partition of $G$ and 
$\{V_1, \ldots, V_k\}$ be any partition of $G$.
The statement of Lemma \ref{Lemm: volume overlap} uses the map  $\sigma : \{1, \ldots, k\} \rightarrow \{1, \ldots, k\}$
defined by 
\begin{align*}
 \sigma(i) = \arg \max_{j \in \{1, \ldots, k\}} \frac{\mu(V_i \cap S_j)}{\mu(S_j)}.
\end{align*}
Note that the statement of the lemma is due to Kolev and Mehlhorn 
and was stated in Lemma 26 of \cite{Kol18}.
\begin{lemm}[Lemma 4.6 of \cite{Pen17}, Lemma 26 of \cite{Kol18}]\label{Lemm: volume overlap}
 We assume that, for every permutation $\pi : \{1, \ldots, k\} \rightarrow \{1, \ldots, k\}$,
 there is an index $\ell \in \{1, \ldots, k\}$ such that
 \begin{align*}
  \mu(V_\ell \triangle S_{\pi(\ell)}) \ge 2 \epsilon \cdot \mu(S_{\pi(\ell)})
 \end{align*}
 holds for a real number $\epsilon$ satisfying $ 0 \le \epsilon \le 1/2$.
 Then, at least one of the following three cases holds.
 \begin{enumerate}[{\normalfont ({Case} 1)}]
  \item  Suppose that $\sigma$ is a permutation and
	 $\mu(S_{\sigma(\ell)} \setminus V_{\ell}) \ge \epsilon \cdot \mu(S_{\sigma(\ell)})$.
	 Then, for every $i \in \{1, \ldots, k\}$, there is a real number $\epsilon_i \ge 0$ such that
	 \begin{align*}
	  & \mu(V_i \cap S_{\sigma(i)}) \ge \epsilon_i \cdot \min\{ \mu(S_{\sigma(i)}), \mu(S_{\sigma(\ell)})  \}, \\
	  & \mu(V_i \cap S_{\sigma(\ell)}) \ge \epsilon_i \cdot \min\{ \mu(S_{\sigma(i)}), \mu(S_{\sigma(\ell)})  \},
	 \end{align*}
	 and $\epsilon_1, \ldots, \epsilon_k$ satisfy $\sum_{i=1}^{k} \epsilon_i \ge \epsilon$.

  \item  Suppose that $\sigma$ is a permutation and
	 $\mu(V_{\ell} \setminus S_{\sigma(\ell)}) \ge \epsilon \cdot \mu(S_{\sigma(\ell)})$.
	 Then, for every $i \in \{1, \ldots, k\}$, there is a real number $\epsilon_i \ge 0$ such that
	 \begin{align*}
	  & \mu(V_{\ell} \cap S_{\sigma(i)}) \ge \epsilon_i \cdot \min\{ \mu(S_{\sigma(i)}), \mu(S_{\sigma(\ell)})  \}, \\
	  & \mu(V_{\ell} \cap S_{\sigma(\ell)}) \ge \epsilon_i \cdot \min\{ \mu(S_{\sigma(i)}), \mu(S_{\sigma(\ell)})  \},
	 \end{align*}
	 and $\epsilon_1, \ldots, \epsilon_k$  satisfy $\sum_{i=1}^{k} \epsilon_i \ge \epsilon$.
	 
  \item Suppose that $\sigma$ is not a permutation.
	Choose an index $m \in \{1, \ldots, k\} \setminus \{\sigma(1), \ldots, \sigma(k)\}$.
	Then, for every $i \in \{1, \ldots, k\}$, there is a real number $\epsilon_i \ge 0$ such that
	\begin{align*}
	 & \mu(V_i \cap S_{\sigma(i)}) \ge \epsilon_i \cdot \min\{ \mu(S_{\sigma(i)}), \mu(S_m)  \}, \\
	 & \mu(V_i \cap S_m) \ge \epsilon_i \cdot \min\{ \mu(S_{\sigma(i)}), \mu(S_m)  \},
	\end{align*}
	and $\epsilon_1, \ldots, \epsilon_k$  satisfy $\sum_{i=1}^{k} \epsilon_i \ge \epsilon$.
 \end{enumerate}
\end{lemm}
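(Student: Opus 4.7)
The plan is a case analysis on whether the map $\sigma$ is a permutation of $\{1,\ldots,k\}$. The workhorse throughout is the defining extremality of $\sigma$: for every $i$ and every $j$, $\mu(V_i \cap S_{\sigma(i)})/\mu(S_{\sigma(i)}) \ge \mu(V_i \cap S_j)/\mu(S_j)$. This lets me convert a lower bound on one overlap mass appearing in a case's conclusion into a lower bound on the other.

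If $\sigma$ fails to be a permutation, some index $m$ is missed, putting me in Case~3. I will set $\epsilon_i = \mu(V_i \cap S_m)/\mu(S_m)$: the first required inequality holds directly because $\mu(S_m) \ge \min\{\mu(S_{\sigma(i)}),\mu(S_m)\}$, while the second follows by applying the $\sigma$-extremality with $j=m$ and clearing denominators. Mass conservation $\sum_i \mu(V_i \cap S_m) = \mu(S_m)$ gives $\sum_i \epsilon_i = 1 \ge \epsilon$, so the hypothesis of the lemma is not even needed in this branch.

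If $\sigma$ is a permutation, I invoke the hypothesis with $\pi = \sigma$ to obtain an $\ell$ with $\mu(V_\ell \triangle S_{\sigma(\ell)}) \ge 2\epsilon \cdot \mu(S_{\sigma(\ell)})$. Splitting the symmetric difference into its two disjoint halves forces at least one of $\mu(S_{\sigma(\ell)} \setminus V_\ell)$ and $\mu(V_\ell \setminus S_{\sigma(\ell)})$ to exceed $\epsilon \cdot \mu(S_{\sigma(\ell)})$, producing the Case~1 vs.\ Case~2 dichotomy. Case~1 mirrors Case~3 with $m$ replaced by $\sigma(\ell)$ and the sum restricted to $i \ne \ell$: setting $\epsilon_i = \mu(V_i \cap S_{\sigma(\ell)})/\mu(S_{\sigma(\ell)})$ for $i \ne \ell$ (and $\epsilon_\ell = 0$), the $\sigma$-extremality again delivers the second inequality, and $\sum_{i \ne \ell} \mu(V_i \cap S_{\sigma(\ell)}) = \mu(S_{\sigma(\ell)} \setminus V_\ell) \ge \epsilon \cdot \mu(S_{\sigma(\ell)})$ gives the sum bound.

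Case~2 is the subtle one and I expect it to be the main obstacle. Here the common reference is $V_\ell$ rather than $S_{\sigma(\ell)}$, so both required inequalities lower-bound $V_\ell$-overlaps, and the asymmetric trick of Case~1 no longer applies directly. My plan is to choose $\epsilon_i$ as the largest common ratio that satisfies both inequalities and then bound the sum by splitting the indices $i \ne \ell$ on whether $\mu(S_{\sigma(i)}) \le \mu(S_{\sigma(\ell)})$. In the ``smaller'' regime the $\sigma$-extremality forces $\mu(V_\ell \cap S_{\sigma(\ell)}) \ge \mu(V_\ell \cap S_{\sigma(i)})$, which collapses $\epsilon_i$ to $\mu(V_\ell \cap S_{\sigma(i)})/\mu(S_{\sigma(i)}) \ge \mu(V_\ell \cap S_{\sigma(i)})/\mu(S_{\sigma(\ell)})$, and the contributions add cleanly via the mass identity $\sum_{i \ne \ell} \mu(V_\ell \cap S_{\sigma(i)}) = \mu(V_\ell \setminus S_{\sigma(\ell)}) \ge \epsilon \cdot \mu(S_{\sigma(\ell)})$. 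The ``larger'' regime, where $\mu(S_{\sigma(i)}) > \mu(S_{\sigma(\ell)})$, is where the bookkeeping becomes delicate: the $\sigma$-extremality must be used in the reverse direction to relate $\mu(V_\ell \cap S_{\sigma(i)})$ to $\mu(V_\ell \cap S_{\sigma(\ell)})$, and a careful reassembly of the contributions from both regimes is needed to recover the full $\epsilon$ on the right-hand side. This is the step I expect to demand the most scrutiny.
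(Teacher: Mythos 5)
First, a framing point: the paper does not prove this lemma at all. It is cited as Lemma~4.6 of Peng et al.\ and restated as Lemma~26 of Kolev and Mehlhorn, and the author explicitly defers to those references. So your proposal cannot be compared against ``the paper's proof''; it can only be judged on its own merits.

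Your treatment of Case~3 and Case~1 is correct (you swapped which of the two displayed inequalities is ``direct'' and which uses $\sigma$-extremality in Case~3, but both are covered; and in Case~1 you should set $\epsilon_\ell=0$, which you do). Your identification of Case~2 as the hard step is also correct. However, the plan you sketch for Case~2 has a genuine gap, and it is not the bookkeeping you anticipate. The missing idea is that one cannot prove ``Case~2 hypothesis $\Rightarrow$ Case~2 conclusion'' as a standalone implication: that implication is \emph{false}. Concretely, take $k=3$, $\sigma=\mathrm{id}$, $\ell=1$, $\mu(S_1)=1$, $\mu(S_2)=\mu(S_3)=100$, $\mu(V_1\cap S_1)=0.01$, $\mu(V_1\cap S_2)=\mu(V_1\cap S_3)=0.5$, $\epsilon=1/2$. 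One checks $\sigma(1)=1$, $\mu(V_1\setminus S_1)=1\ge\epsilon\mu(S_1)$, so the Case~2 hypothesis holds; but the maximum attainable $\epsilon_i$ in the Case~2 conclusion is $\min\{\mu(V_1\cap S_{\sigma(i)}),\,\mu(V_1\cap S_1)\}/\min\{\mu(S_{\sigma(i)}),\mu(S_1)\}=0.01$ for each $i$, giving $\sum_i\epsilon_i=0.03<1/2$. The lemma is still true because Case~1's hypothesis and conclusion hold here. The lemma's ``at least one of the three cases holds'' must therefore be read as ``at least one case's hypothesis \emph{and} conclusion jointly hold,'' not as three independently valid implications.

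Consequently the correct structure is: if $\mu(S_{\sigma(\ell)}\setminus V_\ell)\ge\epsilon\mu(S_{\sigma(\ell)})$, establish Case~1; otherwise $\mu(V_\ell\setminus S_{\sigma(\ell)})\ge\epsilon\mu(S_{\sigma(\ell)})$ forces Case~2's hypothesis \emph{and additionally} $\mu(V_\ell\cap S_{\sigma(\ell)})>(1-\epsilon)\mu(S_{\sigma(\ell)})\ge\epsilon\mu(S_{\sigma(\ell)})$. This extra lower bound is the leverage you are missing. With it Case~2 closes cleanly: set $\epsilon_\ell=0$ and $\epsilon_i=\min\{\mu(V_\ell\cap S_{\sigma(i)}),\,\mu(V_\ell\cap S_{\sigma(\ell)})\}/\mu(S_{\sigma(\ell)})$ for $i\ne\ell$; both displayed inequalities hold since $\epsilon_i\min\{\mu(S_{\sigma(i)}),\mu(S_{\sigma(\ell)})\}\le\epsilon_i\mu(S_{\sigma(\ell)})$ equals the minimum of the two overlap volumes. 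For the sum, either some $i\ne\ell$ has $\mu(V_\ell\cap S_{\sigma(i)})\ge\mu(V_\ell\cap S_{\sigma(\ell)})$, in which case that single $\epsilon_i=\mu(V_\ell\cap S_{\sigma(\ell)})/\mu(S_{\sigma(\ell)})\ge 1-\epsilon\ge\epsilon$ already suffices (this is where the extra bound is used); or else all $\mu(V_\ell\cap S_{\sigma(i)})<\mu(V_\ell\cap S_{\sigma(\ell)})$ and $\sum_{i\ne\ell}\epsilon_i=\mu(V_\ell\setminus S_{\sigma(\ell)})/\mu(S_{\sigma(\ell)})\ge\epsilon$. Your proposed dichotomy on $\mu(S_{\sigma(i)})\lessgtr\mu(S_{\sigma(\ell)})$ is a red herring; the relevant dichotomy is on the overlap masses, and even that fails without importing the information that Case~1's hypothesis is violated.
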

For a better understanding of Lemma \ref{Lemm: volume overlap}, we should note the following relation.
Let $P$ and $Q$ be the subsets of the node set $V$ of $G$ and $c$ be a nonnegative real number.
The inequality $\mu(P \triangle Q) \ge 2c $ implies 
that at least one of  $\mu(P \setminus Q) \ge c$ or $\mu(Q \setminus P) \ge c$ holds.
This is because
$\mu(P \triangle Q) = \mu((P \setminus Q) \cup (Q \setminus P) ) = \mu(P \setminus Q) + \mu(Q \setminus P)$;
hence, if $\mu(P \setminus Q) < c$ and $\mu(Q \setminus P) < c$, we find that $\mu(P \triangle Q) < 2c$,
which contradicts $\mu(P \triangle Q) \ge 2c $.

Now we prove Lemma \ref{Lemm: Bound on the cost of g},
which provides a lower bound on $D(V_1, \ldots, V_k, \w_1, \ldots, \w_k)$
for any vectors $\w_1, \ldots, \w_k \in \Real^k$.
Here, Lemma \ref{Lemm: Well-clustered} ensures that, if $\Psi \ge k$,
there are $\c_1, \ldots, \c_k \in \Real^k$
such that the relation shown in (\ref{Exp: Well-clustered}) holds, i.e.,
\begin{align*}
 \|\c_i - \c_j \|_2^2 \ge \delta_{ij} \ \mbox{for every} \ i \neq j \quad \mbox{and} \quad  
 D(S_1, \ldots, S_k, \c_1, \ldots, \c_k) \le \omega.
\end{align*}
Lemma \ref{Lemm: Bound on the cost of g} uses the above $\delta_{ij}$ and $\omega$ 
to describe the lower bound of $D(V_1, \ldots, V_k, \w_1, \ldots, \w_k)$.
The proof uses the following inequalities shown in \cite{Pen17}.
Let $\p, \q, \r \in \Real^k$. Then,
\begin{align} \label{Exp: Inequality about p and q}
 \| \p \|_2^2 \ge \frac{1}{2} \| \p - \q \|_2^2 - \| \q \|_2^2.
\end{align}
Also, 
\begin{align}  \label{Exp: Inequality about p, q and r}
 \| \p - \r \|_2 \ge \| \q - \r \|_2 \Longrightarrow  \| \p - \r \|_2 \ge \frac{1}{2} \| \p - \q \|_2.
\end{align}
\begin{lemm} \label{Lemm: Bound on the cost of g}
 Let a graph $G$ satisfy $\Psi \ge k$.  Let $\w_1, \ldots, \w_k$ be any vectors in $\Real^k$.
 Suppose that the assumption of Lemma \ref{Lemm: volume overlap} holds.
 Then, the inequality 
 \begin{align*}
  D(V_1, \ldots, V_k, \w_1, \ldots, \w_k) \ge \frac{1}{8} \sum_{i \in L} \xi_i \cdot \delta_{i p} \cdot
  \min \{ \mu(S_i), \mu(S_p) \} - \omega
 \end{align*}
 holds for a subset $L$ of  $\{1, \ldots, k\}$,
 an element $p$ of $\{1, \ldots, k\}$,
 and a real number $\xi_i \ge 0$
 satisfying $\sum_{i \in L} \xi_i \ge \epsilon$, where $\epsilon$ is as in the assumption.
\end{lemm}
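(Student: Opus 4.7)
The plan is to combine Lemma \ref{Lemm: volume overlap} with the cluster--separation bounds of Lemma \ref{Lemm: Well-clustered}, which guarantee that $F(v)$ for $v \in S_j$ concentrates near a center $\c_j$ and that distinct centers satisfy $\|\c_q - \c_p\|_2^2 \ge \delta_{qp}$. First I would invoke Lemma \ref{Lemm: volume overlap} on $\{V_1, \ldots, V_k\}$ and $\{S_1, \ldots, S_k\}$ to obtain one of three structural cases. Each case supplies a common index $p$ ($p = \sigma(\ell)$ in Cases 1 and 2, $p = m$ in Case 3) and, for every $i \in \{1, \ldots, k\}$, a weight $\epsilon_i \ge 0$ with $\sum_i \epsilon_i \ge \epsilon$ together with indices $q(i) = \sigma(i)$ and $m(i) \in \{i, \ell\}$ such that the part $V_{m(i)}$ contains at least $\epsilon_i \min\{\mu(S_{q(i)}), \mu(S_p)\}$ volume from each of $S_{q(i)}$ and $S_p$.

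The next step is the key local bound. By the triangle inequality, for every $i$ at least one of $\|\w_{m(i)} - \c_{q(i)}\|_2$ and $\|\w_{m(i)} - \c_p\|_2$ is at least $\tfrac{1}{2}\sqrt{\delta_{q(i),p}}$; let $j(i) \in \{q(i), p\}$ denote the index achieving this. Applying inequality (\ref{Exp: Inequality about p and q}) with $\p = F(v) - \w_{m(i)}$ and $\q = F(v) - \c_{j(i)}$, multiplying by $d_v$, and summing over $v \in V_{m(i)} \cap S_{j(i)}$, together with the overlap bound on $\mu(V_{m(i)} \cap S_{j(i)})$, yields
\begin{equation*}
\sum_{v \in V_{m(i)} \cap S_{j(i)}} d_v \|F(v) - \w_{m(i)}\|_2^2 \ge \tfrac{1}{8}\,\epsilon_i\,\delta_{q(i),p}\,\min\{\mu(S_{q(i)}), \mu(S_p)\} - E_i,
\end{equation*}
where the slack $E_i = \sum_{v \in V_{m(i)} \cap S_{j(i)}} d_v \|F(v) - \c_{j(i)}\|_2^2$ is part of $D(S_1, \ldots, S_k, \c_1, \ldots, \c_k)$.

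Next I would aggregate over $i$. In Cases 1 and 3, the sets $V_i \cap S_{j(i)}$ are pairwise disjoint, so $\sum_i E_i \le D(S_1, \ldots, S_k, \c_1, \ldots, \c_k) \le \omega$ by Lemma \ref{Lemm: Well-clustered}. Taking $L = \{q(i) : i \in \{1, \ldots, k\}\}$ with $\xi_{q(i)} = \epsilon_i$ (accumulated naturally when two $i$'s map to the same $q(i)$) produces the claimed bound, since $\sum_{i \in L} \xi_i \ge \sum_i \epsilon_i \ge \epsilon$.

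The main obstacle is Case 2, in which $m(i) = \ell$ for every $i$ and the subsets $V_\ell \cap S_{j(i)}$ may coincide, so naive summation would double-count the cost of $\w_\ell$. I would resolve this by replacing $p$ with the index $\sigma(i^*)$ that minimizes $\|\w_\ell - \c_{\sigma(i^*)}\|_2$ and invoking (\ref{Exp: Inequality about p, q and r}) with $\p = \c_{\sigma(i)}$, $\q = \c_{\sigma(i^*)}$, $\r = \w_\ell$ to deduce $\|\w_\ell - \c_{\sigma(i)}\|_2 \ge \tfrac{1}{2}\|\c_{\sigma(i)} - \c_p\|_2$ for every $i \ne i^*$. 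The genuinely disjoint decomposition $V_\ell = \cup_i (V_\ell \cap S_{\sigma(i)})$ induced by the permutation $\sigma$ then allows the local bound to be summed without overcounting, the errors aggregate into $\omega$, and $\sum_{i \ne i^*} \epsilon_i$ satisfies the $\ge \epsilon$ requirement after routine bookkeeping that absorbs the omitted $i^*$ term.
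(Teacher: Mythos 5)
Your treatment of Cases~1 and 3 is essentially the paper's own: choosing $j(i) \in \{\sigma(i), \sigma(\ell)\}$ (resp.\ $\{\sigma(i), m\}$) as whichever center is farther from $\w_i$ is exactly the map $\gamma$ the paper defines, and the aggregation over the pairwise-disjoint sets $V_i \cap S_{j(i)}$ followed by absorbing the error into $D(S_1, \ldots, S_k, \c_1, \ldots, \c_k) \le \omega$ matches the paper's argument.

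The gap is in Case~2, and the ``routine bookkeeping'' you wave at is precisely where the difficulty lies. Two things break. First, $\sum_{i \ne i^*} \epsilon_i \ge \epsilon$ does \emph{not} follow from $\sum_{i=1}^{k} \epsilon_i \ge \epsilon$: the dropped weight $\epsilon_{i^*}$ can carry all of the mass (e.g.\ $\epsilon_{i^*} = \epsilon$, all others zero), in which case your sum degenerates to zero and the lemma's requirement $\sum_{i\in L}\xi_i\ge\epsilon$ fails. Second, replacing $p = \sigma(\ell)$ by $p = \sigma(i^*)$ changes the $\min$-term in the target from $\min\{\mu(S_{\sigma(i)}), \mu(S_{\sigma(\ell)})\}$ to $\min\{\mu(S_{\sigma(i)}), \mu(S_{\sigma(i^*)})\}$, but the volume overlap lemma (Lemma~\ref{Lemm: volume overlap}, Case~2) only guarantees $\mu(V_\ell \cap S_{\sigma(i)}) \ge \epsilon_i \cdot \min\{\mu(S_{\sigma(i)}), \mu(S_{\sigma(\ell)})\}$; when $\mu(S_{\sigma(\ell)}) < \mu(S_{\sigma(i^*)})$ this is strictly weaker than what you need, and there is no uniform correction factor you can fold into $\xi_i$ without risking $\sum\xi_i < \epsilon$.

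The paper avoids both problems by splitting Case~2 differently. If some $\|\c_{\sigma(i)} - \w_\ell\|_2 < \|\c_{\sigma(\ell)} - \w_\ell\|_2$ (your $i^* \ne \ell$ case), it abandons the $\epsilon_i$'s altogether: it first shows one may assume $\mu(V_\ell \cap S_{\sigma(\ell)}) \ge \epsilon \cdot \mu(S_{\sigma(\ell)})$ (otherwise one is back in Case~1), then lower-bounds $D$ using \emph{only} the single term $v \in V_\ell \cap S_{\sigma(\ell)}$, so no index is dropped and the full $\epsilon$ appears directly. If instead $\c_{\sigma(\ell)}$ is closest to $\w_\ell$ (your $i^* = \ell$ case), then $\|\c_{\sigma(i)} - \w_\ell\|_2 \ge \|\c_{\sigma(\ell)} - \w_\ell\|_2$ for every $i$, so $j(i) = \sigma(i)$ for all $i$, the sets $V_\ell \cap S_{\sigma(i)}$ are pairwise disjoint, $p = \sigma(\ell)$ works, and the $\ell$-th term contributes $\delta_{\sigma(\ell)\sigma(\ell)} = 0$ so nothing is lost by including it. You should adopt this subcase split rather than the drop-$i^*$ strategy.
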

The proof is essentially the same as those of Lemma 4.5 in \cite{Pen17} and Lemma 27 in \cite{Kol18}.
We have included it below to make the discussion self-contained.
\begin{proof}
 Let us evaluate a lower bound on $D(V_1, \ldots, V_k, \w_1, \ldots, \w_k)$
 for the three cases considered in Lemma \ref{Lemm: volume overlap}.
 In what follows, we let $\c_1, \ldots, \c_k \in \Real^k$ satisfy the relation shown in (\ref{Exp: Well-clustered}).

 \fbox{Case 1} \
 Define the map  $\gamma : \{1, \ldots, k\} \rightarrow \{1, \ldots, k\}$ by
 \begin{align*}
  \gamma(i) =
  \left\{
  \begin{array}{ll}
   \sigma(i)    & \mbox{if} \ \|\c_{\sigma(i)} - \w_i \|_2 \ge \|\c_{\sigma(\ell)} - \w_i  \|_2,  \\
   \sigma(\ell) & \mbox{otherwise}.
  \end{array}
  \right.
 \end{align*}
 In light of inequality (\ref{Exp: Inequality about p and q}), 
 we lower bound $D(V_1, \ldots, V_k, \w_1, \ldots, \w_k)$ as, 
 \begin{align*}
  D(V_1, \ldots, V_k, \w_1, \ldots, \w_k)
  & = \sum_{i=1}^{k} \sum_{v \in V_i} d_v \|F(v) - \w_i \|_2^2  \\
  & \ge \sum_{i=1}^{k} \sum_{v \in V_i \cap S_{\gamma(i)}} d_v \|F(v) - \w_i \|_2^2   \\
  & \ge \frac{1}{2} \sum_{i=1}^{k} \sum_{v \in V_i \cap S_{\gamma(i)}} d_v \|\c_{\gamma(i)} - \w_i \|_2^2
   - \sum_{i=1}^{k} \sum_{v \in V_i \cap S_{\gamma(i)}} d_v \|F(\v) - \c_{\gamma(i)} \|_2^2.
 \end{align*}
 The second term of the right side is
 \begin{align*}
  \sum_{i=1}^{k} \sum_{v \in V_i \cap S_{\gamma(i)}} d_v \|F(\v) - \c_{\gamma(i)} \|_2^2
  \le D(S_1, \ldots, S_k, \c_1, \ldots, \c_k) \le \omega.
 \end{align*}
 The first term of the right side is 
 \begin{align*}
  \frac{1}{2} \sum_{i=1}^{k} \sum_{v \in V_i \cap S_{\gamma(i)}} d_v \|\c_{\gamma(i)} - \w_i \|_2^2
  & = \frac{1}{2} \sum_{i=1}^{k} \mu(V_i \cap S_{\gamma(i)}) \cdot \|\c_{\gamma(i)} - \w_i \|_2^2 \\
  & \ge \frac{1}{8} \sum_{i=1}^{k} \mu(V_i \cap S_{\gamma(i)}) \cdot \delta_{\sigma(i) \sigma(\ell)} \\
  & \ge \frac{1}{8} \sum_{i=1}^{k} \epsilon_i \cdot \delta_{\sigma(i) \sigma(\ell)}
  \cdot \min \{\mu(S_{\sigma(i)}), \mu(S_{\sigma(\ell)}) \} 
 \end{align*}
 where $\sum_{i=1}^{k} \epsilon_i \ge \epsilon$.
 Here, the first inequality follows from inequality (\ref{Exp: Inequality about p, q and r}),
 and the second inequality follows form Lemma \ref{Lemm: volume overlap}. 
 Consequently, we obtain
 \begin{align*}
  D(V_1, \ldots, V_k, \w_1, \ldots, \w_k)  \ge
  \frac{1}{8} \sum_{i=1}^{k} \epsilon_i \cdot \delta_{\sigma(i) \sigma(\ell)}
  \cdot \min \{\mu(S_{\sigma(i)}), \mu(S_{\sigma(\ell)}) \} - \omega,
 \end{align*}
 which implies the inequality of this lemma.

 \fbox{Case 2} \
 We can assume that 
 \begin{align} \label{Exp: Volume inequality}
  \mu(V_{\ell} \cap S_{\sigma(\ell)}) \ge \epsilon \cdot \mu(S_{\sigma(\ell)})
 \end{align}
 holds. Indeed, if $\mu(V_{\ell} \cap S_{\sigma(\ell)}) < \epsilon \cdot \mu(S_{\sigma(\ell)})$, then,
 \begin{align*}
  \mu(S_{\sigma(\ell)} \setminus V_\ell)  = \sum_{i \in \{1, \ldots, k \} \setminus \{\ell\}} \mu(S_{\sigma(\ell)} \cap V_i)
  = \mu(S_{\sigma(\ell)}) - \mu(S_{\sigma(\ell)} \cap V_{\ell})
  > (1-\epsilon) \mu(S_{\sigma(\ell)}) \ge \epsilon \cdot \mu(S_{\sigma(\ell)}).
 \end{align*}
 This is in case 1. Here, the last inequality comes from the fact that $\epsilon$ satisfies $0 \le \epsilon \le 1/2$.

 First, consider the case in which there is an index $i \in \{1, \ldots, k\}$ such that
 the inequality $\|\c_{\sigma(i)} - \w_{\ell} \|_2 < \|\c_{\sigma(\ell)} - \w_\ell \|_2$ holds.
 We lower bound $D(V_1, \ldots, V_k, \w_1, \ldots, \w_k)$ as, 
 \begin{align*}
  D(V_1, \ldots, V_k, \w_1, \ldots, \w_k)
  & = \sum_{i=1}^{k} \sum_{v \in V_i} d_v \|F(v) - \w_i \|_2^2  \\
  & \ge \sum_{v \in V_\ell \cap S_{\sigma(\ell)}} d_v \|F(v) - \w_\ell \|_2^2   \\
  & \ge \frac{1}{2} \sum_{v \in V_\ell \cap S_{\sigma(\ell)}} d_v \|\c_{\sigma(\ell)} - \w_\ell \|_2^2
   - \sum_{v \in V_\ell \cap S_{\sigma(\ell)}} d_v \|F(\v) - \c_{\sigma(\ell)} \|_2^2.
 \end{align*}
 The second term of the right side is
 \begin{align*}
  \sum_{v \in V_\ell \cap S_{\sigma(\ell)}} d_v \|F(\v) - \c_{\sigma(\ell)} \|_2^2
  \le D(S_1, \ldots, S_k, \c_1, \ldots, \c_k) \le \omega.
 \end{align*}
 The first term of the right side is 
 \begin{align*}
  \frac{1}{2} \sum_{v \in V_\ell \cap S_{\sigma(\ell)}} d_v \|\c_{\sigma(\ell)} - \w_\ell \|_2^2
  & = \frac{1}{2} \mu(V_\ell \cap S_{\sigma(\ell)}) \cdot \|\c_{\sigma(\ell)} - \w_\ell \|_2^2 \\
  & \ge \frac{1}{8} \mu(V_\ell \cap S_{\sigma(\ell)}) \cdot \delta_{\sigma(i) \sigma(\ell)} \\
  & \ge \frac{1}{8}  \epsilon \cdot \delta_{\sigma(i) \sigma(\ell)} \cdot  \mu(S_{\sigma(\ell)}) \\
  & \ge \frac{1}{8}  \epsilon \cdot \delta_{\sigma(i) \sigma(\ell)} \cdot  \min \{\mu(S_{\sigma(i)}), \mu(S_{\sigma(\ell)}) \}. 
 \end{align*}
 The second inequality follows from that of (\ref{Exp: Volume inequality}).
 Consequently, we obtain
 \begin{align*}
  D(V_1, \ldots, V_k, \w_1, \ldots, \w_k)  \ge
  \frac{1}{8} \epsilon \cdot \delta_{\sigma(i) \sigma(\ell)}
  \cdot \min \{\mu(S_{\sigma(i)}), \mu(S_{\sigma(\ell)}) \} - \omega,
 \end{align*}
 which implies the inequality of this lemma.

 Next, consider the case in which, for every $i \in \{1, \ldots, k\}$,
 the inequality $\|\c_{\sigma(i)} - \w_{\ell} \|_2 \ge \|\c_{\sigma(\ell)} - \w_\ell \|_2$ holds.
 We lower bound $D(V_1, \ldots, V_k, \w_1, \ldots, \w_k)$ as, 
 \begin{align*}
  D(V_1, \ldots, V_k, \w_1, \ldots, \w_k)
  & =  \sum_{i=1}^{k} \sum_{v \in V_i} d_v \|F(v) - \w_i \|_2^2  \\
 & \ge \sum_{v \in V_\ell} d_v \|F(v) - \w_\ell \|_2^2   \\
  & \ge \sum_{i=1}^{k} \sum_{v \in V_\ell \cap S_{\sigma(i)}} d_v \|F(v) - \w_\ell \|_2^2   \\
  & \ge \frac{1}{2} \sum_{i=1}^{k} \sum_{v \in V_\ell \cap S_{\sigma(i)}} d_v \|\c_{\sigma(i)} - \w_\ell \|_2^2
   - \sum_{i=1}^{k} \sum_{v \in V_\ell \cap S_{\sigma(i)}} d_v \|F(\v) - \c_{\sigma(i)} \|_2^2.
 \end{align*}
 The second term of the right side is
 \begin{align*}
  \sum_{i=1}^{k} \sum_{v \in V_\ell \cap S_{\sigma(i)}} d_v \|F(\v) - \c_{\sigma(i)} \|_2^2
  \le D(S_1, \ldots, S_k, \c_1, \ldots, \c_k) \le \omega.
 \end{align*}
 The first term of the right side is 
 \begin{align*}
  \frac{1}{2} \sum_{i=1}^{k} \sum_{v \in V_\ell \cap S_{\sigma(i)}} d_v \|\c_{\sigma(i)} - \w_\ell \|_2^2
  & = \frac{1}{2} \sum_{i=1}^{k} \mu(V_\ell \cap S_{\sigma(i)}) \cdot \|\c_{\sigma(i)} - \w_\ell \|_2^2 \\
  & \ge \frac{1}{8} \sum_{i=1}^{k} \mu(V_\ell \cap S_{\sigma(i)}) \cdot \delta_{\sigma(i) \sigma(\ell)} \\
  & \ge \frac{1}{8} \sum_{i=1}^{k} \epsilon_i \cdot \delta_{\sigma(i) \sigma(\ell)} \cdot  \min \{\mu(S_{\sigma(i)}), \mu(S_{\sigma(\ell)}) \}
 \end{align*}
 where $\sum_{i=1}^{k} \epsilon_i \ge \epsilon$.
 Consequently, we obtain
 \begin{align*}
  D(V_1, \ldots, V_k, \w_1, \ldots, \w_k)  \ge
  \frac{1}{8} \sum_{i=1}^{k} \epsilon_i \cdot \delta_{\sigma(i) \sigma(\ell)}
  \cdot \min \{\mu(S_{\sigma(i)}), \mu(S_{\sigma(\ell)}) \} - \omega,
 \end{align*}
 which implies the inequality of this lemma.

 \fbox{Case 3} \  The proof is the same as in case 1
 since the statement of case 3 in Lemma \ref{Lemm: volume overlap} matches that of case 1
 after replacing $m$ with $\sigma(\ell)$.
 \end{proof}

 Accordingly, we are now in a position to prove Theorem \ref{Theo: Lower bound}.
 \begin{proof}[(Proof of Theorem \ref{Theo: Lower bound})]
  As mentioned in Section \ref{Subsec: Spectral algorithms},
  a clustering cost can be written  in (\ref{Exp: expression of clustering cost}) if assumption (A) holds.
  Thus, $\COST$ can be written as 
  \begin{align*}
   \COST = D(A_1, \ldots, A_k, \z_1, \ldots, \z_k)
  \end{align*}
  using a  partition $\{A_1, \ldots, A_k\}$ of $G$ and vectors $\z_1, \ldots, \z_k \in \Real^k$
  returned by the $k$-means clustering algorithm applied to the problem $\Prob(Y)$, since assumption (A) holds.
  Hence, we obtain inequalities (a) and (b) of this theorem
  from Lemmas \ref{Lemm: Well-clustered} and  \ref{Lemm: Bound on the cost of g}.
 \end{proof}

\section{Proof of Theorem \ref{Theo: Structure theorem}} \label{Sec: Extended theorem} 
Let $\{S_1, \ldots, S_k\}$ be a  partition of a graph $G = (V,E)$.
The conductance $\phi(S_i)$ of $S_i$ can be expressed as 
$\phi(S_i) = \bar{\g}_i^\trans \LC \bar{\g}_i$ 
using the normalized indicator $\bar{\g}_i$ of $S_i$ and the normalized Laplacian $\LC$ of $G$.
This comes from the observation that $\phi(S_i)$ can be expressed as 
\begin{align*}
 \phi(S_i)
 = \frac{\g_i^\trans \L \g_i}{\g_i^\trans \D \g_i}
 = \frac{(\D^{1/2}\g_i)^\trans (\D^{-1/2} \L \D^{-1/2}) (\D^{1/2} \g_i)}{\| \D^{1/2} \g_i \|_2^2}
 = \bar{\g}_i^\trans \LC \bar{\g}_i,
\end{align*}
since $| E(S_i, V \setminus S_i) |$ is equivalent to $\g_i^\trans \L \g_i$ and
$\mu(S_i)$ is equivalent to $\g_i^\trans \D \g_i$.
Here, $\L$ and $\D$ are the Laplacian and the degree matrix of $G$, and $\g_i$ is the indicator of $S_i$.
In discussion below, we use the expression  $\phi(S_i) = \bar{\g}_i^\trans \LC \bar{\g}_i$.

Let $\{S_1, \ldots, S_k\}$ be $\bar{\phi}_k(G)$-optimal.
Note that $\bar{\g}_1, \ldots, \bar{\g}_k \in \Real^n$ are their normalized indicators.
We choose the eigenvectors $\f_1, \ldots, \f_n \in \Real^n$ of $\LC$.
Since they serve as orthonormal bases in $\Real^n$,
we can expand $\bar{\g}_i$ as   
\begin{align*}
 \bar{\g}_i = c_{i, 1} \f_1 + \cdots + c_{i, n} \f_n
\end{align*}
for real numbers $c_{i,1}, \ldots, c_{i,n}$.
Considering this expression of $\bar{\g}_i$, we lower bound $\phi(S_i) = \bar{\g}_i^\trans \LC \bar{\g}_i$ as
\begin{align*}
 \phi(S_i)
 & = \bar{\g}_i^\trans \LC \bar{\g}_i \\
 & = \lambda_1 c_{i,1}^2 + \cdots + \lambda_n c_{i,n}^2  \\
 & \ge \lambda_1 c_{i,1}^2 + \cdots + \lambda_k c_{i,k}^2 + \lambda_{k+1} (c_{i, k+1}^2 + \cdots + c_{i, n}^2) \\
 & \ge \lambda_{k+1} (c_{i, k+1}^2 + \cdots + c_{i, n}^2).
\end{align*}
Recall that $\lambda_1, \ldots, \lambda_n$ denote the eigenvalues of $\LC$.
The second equality is obtained by considering the eigenvalue decomposition of $\LC$.
The first and second inequalities follow from the properties of the eigenvalues $\lambda_1, \ldots, \lambda_n$ of $\LC$ 
that satisfy $0 = \lambda_1 \le \cdots \le \lambda_n \le 2$.
We truncate the expression $\bar{\g}_i = c_{i, 1} \f_1 + \cdots + c_{i, n} \f_n$ at the $k$th term
to get the expression $\hat{\f}_i$,
\begin{align*}
 \hat{\f}_i = c_{i, 1} \f_1 + \cdots + c_{i, k} \f_k.
\end{align*}
Since  $\| \bar{\g}_i - \hat{\f}_i \|_2^2 = c_{i, k+1}^2 + \cdots + c_{i, n}^2$,
the lower bound on $\phi(S_i)$ shown above can be rewritten as 
$\lambda_{k+1} \| \bar{\g}_i - \hat{\f}_i \|_2^2$.
Accordingly, we obtain 
\begin{align*}
 \bar{\phi}_k(G) = \frac{1}{k} \sum_{i=1}^{k} \phi(S_i)
 \ge \frac{\lambda_{k+1}}{k} \sum_{i=1}^{k}\| \bar{\g}_i - \hat{\f}_i \|_2^2
 = \frac{\lambda_{k+1}}{k} \|\bar{\G} - \hat{\F} \|_F^2.
\end{align*}
Here, let $\bar{\G} = [\bar{\g}_1, \ldots, \bar{\g}_k] \in \Real^{n \times k}$ and
$\hat{\F} = [\hat{\f}_1, \ldots, \hat{\f}_k] \in \Real^{n \times k}$.
Note that $\hat{\f}_i$ is represented as $\hat{\f}_i = \F \c_i$, 
where $\F = [\f_1, \ldots, \f_k] \in \Real^{n \times k}$ and $\c_i = [c_{i,1}, \ldots, c_{i,k}]^\trans \in \Real^k$; 
$\hat{\F}$ is represented as $\hat{\F} = \F \C$, where $\C = [\c_1, \ldots, \c_k] \in \Real^{k \times k}$.
Since $\Psi > 0$, the above relation can be rewritten as 
\begin{align} \label{Exp: hat_F and bar_G}
 \| \hat{\F} - \bar{\G} \|_F^2 = \|\F \C - \bar{\G} \|_F^2  \le  k / \Psi.
\end{align}

We will show that $\C$ is close to being orthogonal.
Indeed, for the columns $\c_i$ and $\c_j$ of $\C$,  we have 
\begin{align} \label{Exp: Relation of hat_f and c}
 \c_i^\trans \c_j = \c_i^\trans \F^\trans \F \c_j = \hat{\f}_i^\trans \hat{\f}_j.
\end{align}
Inequality (\ref{Exp: hat_F and bar_G}) tells us that the $i$th column $\hat{\f}_i$ of $\hat{\F}$
is close to the $i$th column $\bar{\g}_i$ of $\bar{\G}$ if $\Psi$ is large.
Since $\bar{\g}_1, \ldots, \bar{\g}_k$ are orthonormal,
we can see that  $\c_i^\trans \c_j$ is close to one if $i=j$ and zero otherwise.
The following lemma justifies this observation.
A similar observation was made by Peng et al.\ \cite{Pen17} and Kolev and Mehlhorn \cite{Kol18}.

\begin{lemm}\label{Lemm: C is close to being orthogonal}
 Let $\C, \hat{\F}$ and $\bar{\G}$ be defined as above.
 Let $\Y = \hat{\F} - \bar{\G}$.
 Then,  $\Y^\trans \Y = \I - \C^\trans \C$ holds.
\end{lemm}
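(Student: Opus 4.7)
The plan is to expand $\Y^\trans \Y = (\hat{\F} - \bar{\G})^\trans (\hat{\F} - \bar{\G})$ and identify each of the four resulting terms. Writing it out, one has
\begin{equation*}
\Y^\trans \Y = \hat{\F}^\trans \hat{\F} - \hat{\F}^\trans \bar{\G} - \bar{\G}^\trans \hat{\F} + \bar{\G}^\trans \bar{\G},
\end{equation*}
so the task reduces to evaluating these four matrices and showing that the middle two each equal $\C^\trans \C$, while the first is also $\C^\trans \C$ and the last is $\I$.

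The diagonal terms are immediate from orthonormality. Since $\f_1, \ldots, \f_k$ are orthonormal, $\F^\trans \F = \I$, so $\hat{\F}^\trans \hat{\F} = \C^\trans \F^\trans \F \C = \C^\trans \C$. On the other side, $\bar{\g}_1, \ldots, \bar{\g}_k$ are orthonormal (each has unit $2$-norm by normalization, and they have disjoint supports because $\{S_1, \ldots, S_k\}$ is a partition), hence $\bar{\G}^\trans \bar{\G} = \I$.

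The key observation, and essentially the only content of the lemma, is the identity $\F^\trans \bar{\G} = \C$. This comes straight from the expansion $\bar{\g}_i = c_{i,1}\f_1 + \cdots + c_{i,n}\f_n$: applying $\f_j^\trans$ and using orthonormality of the full basis $\f_1, \ldots, \f_n$ gives $\f_j^\trans \bar{\g}_i = c_{i,j}$ for $j \le k$, so the $(j,i)$-entry of $\F^\trans \bar{\G}$ is $c_{i,j}$, which is exactly the $(j,i)$-entry of $\C = [\c_1, \ldots, \c_k]$. Once this is in hand, the two cross terms are both $\C^\trans \C$, since $\bar{\G}^\trans \hat{\F} = \bar{\G}^\trans \F \C = (\F^\trans \bar{\G})^\trans \C = \C^\trans \C$, and similarly for the transpose.

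Substituting these four evaluations back gives $\Y^\trans \Y = \C^\trans \C - \C^\trans \C - \C^\trans \C + \I = \I - \C^\trans \C$, which is the claim. There is no real obstacle here; the only thing to be careful about is recognizing that the coefficients $c_{i,j}$ appearing in the definition of $\C$ are precisely the inner products $\f_j^\trans \bar{\g}_i$, which is what links $\bar{\G}$ back to $\F$ and $\C$.
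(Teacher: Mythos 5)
Your proof is correct and takes essentially the same approach as the paper: both expand the quadratic form $\Y^\trans\Y = \hat{\F}^\trans\hat{\F} - \hat{\F}^\trans\bar{\G} - \bar{\G}^\trans\hat{\F} + \bar{\G}^\trans\bar{\G}$ and identify the terms via orthonormality of $\f_1, \ldots, \f_n$ and of $\bar{\g}_1, \ldots, \bar{\g}_k$. The paper carries out the computation entry-wise (showing $\hat{\f}_i^\trans\bar{\g}_j = \c_i^\trans\c_j$ and then evaluating $\y_i^\trans\y_j$), whereas you phrase the same facts at the matrix level by isolating the identity $\F^\trans\bar{\G} = \C$, which is a slightly cleaner bookkeeping of the identical argument.
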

\begin{proof}
 For the $i$th column $\hat{\f}_i$ of $\hat{\F}$ and the $j$th column $\bar{\g}_j$ of $\bar{\G}$, 
 we have
 \begin{align*}
  \hat{\f}_i^\trans \bar{\g}_j
  = (c_{i, 1} \f_1 + \cdots + c_{i, k} \f_k)^\trans (c_{j, 1} \f_1 + \cdots + c_{j, n} \f_n)
  = c_{i,1}c_{j,1} + \cdots + c_{i,k}c_{j,k}
  = \c_i^\trans \c_j.
 \end{align*}
 Hence, by including (\ref{Exp: Relation of hat_f and c}), we obtain
 \begin{align*}
  \c_i^\trans \c_j = \hat{\f}_i^\trans \hat{\f}_j = \hat{\f}_i^\trans \bar{\g}_j.
 \end{align*}
 Let $\y_i$ be the $i$th column of $\Y$, which is given as $\y_i = \hat{\f}_i - \bar{\g}_i$.
 In light of the above relation,  the $(i,j)$th element of $\Y^\trans \Y$ is 
 \begin{align*}
  \y_i^\trans \y_j
  = (\hat{\f}_i - \bar{\g}_i)^\trans (\hat{\f}_j - \bar{\g}_j)
  = \hat{\f}_i^\trans \hat{\f}_j - \hat{\f}_i^\trans \bar{\g}_j  - \bar{\g}_i^\trans \hat{\f}_j + \bar{\g}_i^\trans \bar{\g}_j
  = - \c_i^\trans \c_j + \bar{\g}_i^\trans \bar{\g}_j.
 \end{align*} 
 Hence, $\y_i^\trans \y_j = 1 - \c_i^\trans \c_j$ if $i = j$; otherwise, $\y_i^\trans \y_j = -\c_i^\trans \c_j$.
 This means that $\Y^\trans \Y = \I - \C^\trans \C$ holds.
\end{proof}

We are now ready to prove Theorem \ref{Theo: Structure theorem}.
\begin{proof}[(Proof of Theorem \ref{Theo: Structure theorem})]
 Let $\C$ be defined as above.  
 It follows from Lemma \ref{Lemm: C is close to being orthogonal} and inequality of (\ref{Exp: hat_F and bar_G}) that
 \begin{align*}
  \|\I - \C^\trans \C\|_F = \|\Y^\trans \Y \|_F \le \|\Y\|_F^2 = \|\hat{\F} - \bar{\G} \|_F^2  \le k / \Psi.
 \end{align*}
 We consider the singular value decomposition of $\C$, 
 given as $\C = \A \Sigmab \B^\trans$. 
 Here, $\A$ and $\B$ are \by{k}{k} orthogonal matrices,
 and $\Sigmab$ is a \by{k}{k} diagonal matrix 
 whose all diagonal elements are nonnegative. 
 Let $\sigma_1, \ldots, \sigma_k$ denote  the diagonal elements of $\Sigmab$.
 This decomposition implies 
 \begin{align*}
  \|\I - \C^\trans \C \|_F 
  & = \|\I - \B \Sigmab^2 \B^\trans \|_F \\
  & = \|\B (\I - \Sigmab^2) \B^\trans \|_F \\
  & = \|\I - \Sigmab^2 \|_F \\
  & = \sqrt{(1 - \sigma_1^2)^2 + \cdots + (1 - \sigma_k^2)^2} \\
  & = \sqrt{(1 - \sigma_1)^2 (1 + \sigma_1)^2 + \cdots + (1 - \sigma_k)^2(1 + \sigma_k)^2} \\
  & \ge \sqrt{(1 - \sigma_1)^2 + \cdots + (1 - \sigma_k)^2} \\
  & = \| \I - \Sigmab \|_F.
 \end{align*}
 We thus get the inequality $\| \I - \Sigmab \|_F \le k / \Psi$.
 Let us represent $\C$ as 
 \begin{align*}
  \C = \U + \R  
 \end{align*}
 by letting $\U = \A \B^\trans \in \Real^{k \times k}$ and $\R = \A (\Sigmab - \I) \B^\trans \in \Real^{k \times k}$.
 We then observe the followings: $\U$ is orthogonal since it satisfies $\U^\trans \U = \U \U^\trans = \I$;
 and $\R$ satisfies $\|\R\|_F \le k / \Psi$ 
 since 
 \begin{align*}
  \|\R \|_F = \|\A (\Sigmab - \I) \B^\trans \|_F = \|\Sigmab - \I\|_F \le k / \Psi.
 \end{align*}
 Using these observations and inequality of (\ref{Exp: hat_F and bar_G}),
 we find that
 \begin{align*}
  \|\F \U - \bar{\G} \|_F
  & = \|\F \U - \hat{\F} + \hat{\F} - \bar{\G} \|_F \\
  & \le \|\F \U - \hat{\F} \|_F + \| \hat{\F} - \bar{\G} \|_F \\
  & = \|\F\R \|_F + \| \hat{\F} - \bar{\G} \|_F \\
  & = \|\R \|_F + \| \hat{\F} - \bar{\G} \|_F \\
  & \le k / \Psi + \sqrt{k / \Psi}.
 \end{align*}
 Consequently, we see that
 there is some orthogonal matrix $\U$ such that $  \|\F \U - \bar{\G} \|_F \le k / \Psi + \sqrt{k / \Psi}$.
\end{proof}

\section{Concluding Remarks} \label{Sec: Concluding remarks}
We examined the performance of the spectral algorithm studied by Peng et al.\ \cite{Pen17} and Kolev and Mehlhorn \cite{Kol16},
and obtained a better performance guarantee under a weaker assumption.
Although they dealt with the case of using the spectral embedding map $\SmMap$ developed by Shi and Malik \cite{Shi00},
we additionally examined the case of using the spectral embedding map $\NjwMap$ developed by Ng et al.\ \cite{Ng02}. 
We summarized our results as Theorem \ref{Theo: Main}.

We close this paper by suggesting directions for future research.
Theorem \ref{Theo: Main} implies that 
a spectral algorithm with $\SmMap$ outperforms the one with $\NjwMap$.
This result poses a question as to whether we can prove that the performance gap is reduced.
So far, various types of spectral algorithms have been developed for tackling graph partitioning problems. 
For clustering tasks,
it is standard practice to use the algorithms of Shi and Malik \cite{Shi00} and Ng et al.\ \cite{Ng02}.
The spectral algorithm we examined differs from these standard ones.
Let $F$ be a spectral embedding map and $X$ be the set of points $F(v)$ for nodes $v$ of a graph.
The standard algorithms apply a $k$-means clustering algorithm to problem $\Prob(X)$, 
while our algorithm first forms the expansion $Y$ of $X$ and then applies 
a $k$-means clustering algorithm satisfying assumption (A) to problem $\Prob(Y)$.
While we could examine the performance of the standard algorithms,
wherein we obtain a performance guarantee by following the proof of Theorem \ref{Theo: Main},
the guarantee so obtained would be worse than the one stated in the theorem.
It remains an open question as to  whether, for standard algorithms, 
we can give a performance guarantee at a similar level to Theorem \ref{Theo: Main}.

\section*{Acknowledgements}
The author would like to thank one of the referees 
for pointing out a flaw in the previous version of Theorem 4. 
It was fixed in the current version.
This research was supported by the Japan Society for the Promotion of Science
(JSPS KAKENHI Grant Numbers 15K20986).

\bibliographystyle{abbrv}
\bibliography{reference}

\end{document}